\documentclass{article}

\PassOptionsToPackage{numbers,compress}{natbib}
\IfFileExists{neurips_2026.sty}{
  
\usepackage[preprint]{neurips_2026}
}{
  
  \usepackage{neurips_2026}
}
\usepackage{graphicx}

\setkeys{Gin}{draft=false}
\usepackage[utf8]{inputenc}
\usepackage[T1]{fontenc}
\usepackage{hyperref}
\hypersetup{hypertexnames=false}
\usepackage{url}
\usepackage{booktabs}
\usepackage{amsfonts}
\usepackage{nicefrac}
\usepackage{microtype}
\usepackage{xcolor}
\usepackage{array}
\usepackage{amsmath,amssymb,amsthm}
\usepackage{algorithm}
\usepackage{algpseudocode}
\usepackage{float}
\usepackage{caption}
\usepackage{enumitem}

\newtheorem{proposition}{Proposition}

\title{M$^3$: Reframing Training Measures for Discretized Physical Simulations}

\author{
Yuan Mei$^{1,2}$\thanks{Visiting student at The University of Tokyo from Zhejiang University.}
\quad
Xingyu Song$^{2}$
\quad
Xiaowen Song$^{1}$
\quad
Naoya Takeishi$^{2}$\\[0.5em]
$^{1}$Zhejiang University\\
$^{2}$The University of Tokyo\\
\texttt{\{meiyuan98, songxw\}@zju.edu.cn}\\
\texttt{\{xsong, ntake\}@g.ecc.u-tokyo.ac.jp}
}

\begin{document}

\maketitle

\begin{abstract}
    Neural surrogate models for physical simulations are trained on discretized samples of continuous domains, where the induced empirical measure leads to uneven supervision, biasing optimization and causing spatial inconsistencies in physical fidelity. To mitigate this measure-induced bias, we propose M$^3$ (Multi-scale Morton Measure), a scalable framework that balances training measures by partitioning space according to physical variation and allocating supervision across multiple scales. Applied to three industrial-scale datasets with diverse discretizations, M$^3$ consistently improves predictions in the continuous physical domain, achieving up to 4.7$\times$ lower error in large-scale volumetric cases. These gains persist under aggressive subsampling (160M $\rightarrow$ 16M $\rightarrow$ 1.6M points), where M$^3$-trained models outperform those trained on higher-resolution data, reducing physics-weighted relative $L_2$ error by 3--4$\times$ and the corresponding MSE by up to 13$\times$. These results highlight data distribution as a key factor in operator learning and position M$^3$ as a scalable, data-efficient approach for physically consistent modeling. Code is available at \url{https://github.com/PhysDataRefine/M3}.
\end{abstract}

\section{Introduction}
\label{sec:intro}

Modern neural surrogate models have become an effective tool for approximating complex physical systems \cite{li2021fno,kovachki2023neural}. By learning mappings over continuous spatial domains, they enable fast predictions at arbitrary resolutions and query locations~\cite{alkin2025abupt,alkin2025abupt_shift}. In practice, however, training relies on finite samples obtained from discretized simulation outputs, and thus the effective training distribution is implicitly defined by the discretization scheme employed in the simulation. It induces an empirical measure over the domain that inherits the discretization's non-uniform spatial structure.

As a result, even full-resolution pointwise training is inherently biased. Finely discretized regions receive disproportionately more optimization updates, while sparse regions contribute weaker training signals, leading to spatially non-uniform approximation errors across the domain. This issue is particularly pronounced in high-fidelity industrial datasets that contain $10^6$--$10^8$ mesh elements per case \cite{ashton2024ahmedml,ashton2024drivaerml,luminary2025shiftwing}, where full-data training becomes impractical.
Merely subsampling such training data inevitably inherits and amplifies the imbalance due to the non-uniform coverage in physical space. It may become a dominant source of error, and increasing model capacity may be insufficient to compensate for the systematically deficient or underrepresented physical signals.

Evaluation can also create a misleading illusion when metrics are computed with uniform pointwise weighting. Due to the discretization bias, large errors in sparsely sampled regions can be overwhelmed by the accumulation of many small errors in dense regions. As a result, performance tends to be dominated by overrepresented areas, yielding overly optimistic estimates that may not reflect the model's true behavior over the domain. Such metrics therefore provide an unreliable basis for comparison, making it difficult to draw clear conclusions about relative model performance.

These observations motivate viewing subsampling not only as data compression, but also as a problem of physically meaningful empirical measure design. The key question is how to construct a training measure that reasonably captures cross-scale physical variation rather than discretization artifacts. Under a fixed budget, both the distribution support (which locations are supervised) and the allocation of supervision mass (how often they enter the loss) are critical. To address this, we propose \textbf{M$^3$} (\emph{Multi-scale Morton Measure}), a lightweight pipeline that constructs structured training distributions for large-scale point clouds. It builds a variation-aware spatial partition via Morton ordering, assigns cells to scale-aware strata, and permits flexible allocation of budget over the resulting support. This yields balanced spatial coverage across scales in the physical domain. Based on this formulation, our contributions are summarized as follows:
\begin{itemize}[itemsep=0pt,topsep=0pt,leftmargin=*]
    \item We identify that discretization-induced data structures systematically alter the effective training objective in operator learning, leading to biased optimization and unreliable performance assessment under standard pointwise metrics.
    
    \item We propose \textbf{M$^3$}, a scalable framework for large-scale simulation subsampling that constructs multi-scale empirical measures aligned with underlying physical structure, enabling data-efficient training under limited computational budgets and improving generalization across benchmarks.

    \item We show that, in operator learning, increasing raw simulation data alone does not necessarily improve performance under spatially non-uniform discretizations. In contrast, models trained with physically structured measures achieve stronger performance with less data.
\end{itemize}

\section{Related Work}
\label{sec:related}
\paragraph{Neural PDE Surrogates and Architectures.}
Neural PDE surrogates model physical systems via operator learning, with representative approaches including FNOs~\cite{li2021fno} and DeepONets~\cite{lu2021deeponet}, alongside mesh- and graph-based simulators~\cite{pfaff2021meshgraphnets,sanchezgonzalez2020learning,brandstetter2022message} and attention-based architectures over mesh tokens~\cite{li2023pdeformer,hao2023gnot,wu2024transolver,alkin2025abupt}.
Recent scalable variants further extend this paradigm through latent or compressed interfaces, including slice- or low-rank token exchange (e.g., Transolver~\cite{wu2024transolver}), anchor-based representations (e.g., AB-UPT~\cite{alkin2025abupt,alkin2025abupt_shift}), and latent aggregation before readout (e.g., GAOT~\cite{wen2025gaot}), improving computational efficiency and information exchange at scale.
These approaches operate on sampled points over discretized domains and optimize for accuracy and scalability, but they do not control which locations contribute to the training loss or how frequently.

\paragraph{Sampling and Subset Training.}
\label{par:related-sampling-subset}
Industrial datasets often reach terabytes and contain $10^6$--$10^8$ mesh points per case~\cite{ashton2024ahmedml,ashton2024drivaerml,luminary2025shiftwing}, making full-resolution training impractical. As a result, most pipelines rely on subsampled vertices or token batches~\cite{wu2024transolver,hao2023gnot,alkin2025abupt_shift,zhou2026transolver3}. They commonly adopt a two-stage pipeline: raw data are downsampled to a manageable scale (e.g., $10\%$) offline, and training tokens are generated via a second-stage subsampling process. Each update corresponds to a Monte Carlo estimate under the induced sampling distribution, yielding a sampling-dependent supervision pattern over the discretized domain.

Prior work improves performance by modifying the sampling strategy over discretization-induced candidate sets. Representative approaches include adaptive residual- or error-driven point selection for PINNs~\cite{wu2022pinnSampling,song2025rlpinns}, non-uniform minibatch construction methods such as DPP-based sampling~\cite{bardenet2021dppMinibatch,zhang2017dmsgd}, and stratified or side-information-guided schemes~\cite{liu2020sgdrs,gopal2016adaptiveSamplingSGD}. Classical adaptive mesh refinement (AMR)~\cite{berger1984amr} adjusts spatial resolution based on solution features, but operates on mesh discretization rather than stochastic minibatch optimization.

\paragraph{Data-efficient Learning under Fixed Support.}
\label{par:related-fixed-support}
Beyond batch-level sampling, a range of statistical techniques, including importance weighting~\cite{kimura2024importanceWeightingSurvey,shimodaira2000covariateShift}, coresets~\cite{mirzasoleiman2020coresets,sener2018coreset}, and active learning~\cite{deepActiveLearningSurvey2024}, improve data efficiency by reweighting samples or selecting informative subsets from an existing dataset. These methods approximate the full-data objective using a reduced or rebalanced set of samples. Consequently, their improvements are confined to observed data and do not extend to sparsely sampled or unseen regions under a fixed budget.

\begin{table}[t]
  \centering
  \small
  \setlength{\tabcolsep}{5pt}
  \renewcommand{\arraystretch}{1.1}
  \caption{\textbf{Comparison of data selection families.}
  Methods are compared along key design axes: spatial control (explicit control over spatial coverage), hierarchical structure (explicit multi-level organization), multi-scale allocation (simultaneous allocation across scales), and scalability to full mesh size~$N$.
  \checkmark/\checkmark\checkmark: supported, with \checkmark\checkmark indicating a core design;
  $\times$/$\times\times$: not supported, with $\times\times$ indicating impractical at scale.}
  \label{tab:method-positioning}
  \begin{tabular}{lcccc}
  \toprule
  \textbf{Family} 
  & \textbf{Spatial control} 
  & \textbf{Hierarchical} 
  & \textbf{Multi-scale allocation} 
  & \textbf{Scalable to large $N$} \\
  \midrule
  
  Random sampling
  & $\times$ & $\times$ & $\times$ & \checkmark\checkmark \\
  
  Importance sampling 
  & $\checkmark$ & $\times$ & $\times$ & \checkmark \\
  
  Stratified sampling 
  & $\checkmark$ & $\checkmark$ & $\times$ & \checkmark \\
  
  AMR 
  & $\checkmark$ & $\checkmark$ & $\times$ & $\times\times$ \\
  
  \textbf{M$^3$} 
  & \textbf{\checkmark\checkmark} & \textbf{\checkmark\checkmark} & \textbf{\checkmark\checkmark} & \textbf{\checkmark\checkmark} \\
  
  \bottomrule
  \end{tabular}
\end{table}

Existing approaches, summarized in Table~\ref{tab:method-positioning}, shape the effective training distribution under limited computational budgets through architectural design, representation choices, or sample selection and weighting. However, this distribution is typically an implicit byproduct of discretization and data pipelines rather than a controlled design variable. In contrast, we explicitly construct the training measure via a multi-scale scheme that governs spatial coverage and allocation over the discretized physical domain, providing a complementary perspective on data-efficient learning.

\section{M\texorpdfstring{$^3$}{3}: Multi-scale Morton Measure}
\label{sec:method}
\label{sec:method-setup}

\begingroup
\setlength{\abovedisplayskip}{4pt plus 1pt minus 1pt}
\setlength{\belowdisplayskip}{4pt plus 1pt minus 1pt}
\setlength{\abovedisplayshortskip}{2pt plus 1pt minus 1pt}
\setlength{\belowdisplayshortskip}{2pt plus 1pt minus 1pt}
\setlength{\textfloatsep}{6pt plus 2pt minus 2pt}
\setlength{\intextsep}{6pt plus 2pt minus 2pt}
\setlength{\floatsep}{6pt plus 2pt minus 2pt}
\setlength{\abovecaptionskip}{3pt}
\setlength{\belowcaptionskip}{0pt}
\vspace*{-3pt}
\subsection{Problem Setup and Objective}\label{sec:problem-setup}
We consider supervised learning from discretized physical simulations, where each simulation is represented as a labeled point set
$\mathcal{P} = \{(\mathbf{x}_i, \mathbf{y}_i)\}_{i=1}^{N}$,
with $\mathbf{x}_i \in \Omega \subset \mathbb{R}^3$ denoting spatial locations and $\mathbf{y}_i \in \mathbb{R}^d$ the corresponding physical quantities. These samples are generated via a discretization process (e.g., particle- or grid-based solvers), producing a finite representation of an underlying continuous physical field. Let $\mu$ denote the underlying data distribution over $(\mathbf{x}, \mathbf{y})$. The population risk under a pointwise loss $\ell$ is defined as
\begin{equation}
\label{eq:population-risk}
\mathcal{R}_{\mu}(\theta)
:= \mathbb{E}_{(\mathbf{x},\mathbf{y})\sim\mu}
\bigl[\ell(f_\theta(\mathbf{x}),\mathbf{y})\bigr].
\end{equation}
Given the full dataset $\mathcal{P}$, learning is performed via the empirical measure
$\hat{\mu}_{\mathcal{P}} = \frac{1}{N}\sum_{i=1}^{N}\delta_{(\mathbf{x}_i,\mathbf{y}_i)}$,
and the corresponding empirical risk $\mathcal{R}_{\hat{\mu}_{\mathcal{P}}}(\theta)$. In practice, due to computational constraints in large-scale simulations ($N \gtrapprox 10^6$--$10^8$), training is performed on a subset $\mathcal{I}_{\mathrm{tr}} \subseteq \{1,\dots,N\}$ with $|\mathcal{I}_{\mathrm{tr}}| = m \ll N$, inducing a subsampled training measure $\mu_S = \frac{1}{m}\sum_{i \in \mathcal{I}_{\mathrm{tr}}} \delta_{(\mathbf{x}_i,\mathbf{y}_i)}$.
The learning objective becomes minimizing $\mathcal{R}_{\mu_S}(\theta)$. Learning is therefore governed by $\mu_S$, which typically defines an approximation of the empirical distribution $\hat{\mu}_{\mathcal{P}}$. 

From this perspective, the key problem is to construct a target distribution, namely $\mu^\star$, that better approximates the underlying physical system, instead of the straightforward empirical measure $\hat{\mu}_{\mathcal{P}}$ and its subsampled version. On top of that, we want to approximate such a target with a fixed budget as $\mu_S$. The resulting measure should:
\begin{enumerate}[itemsep=0pt,topsep=0pt,leftmargin=*,label=(\arabic*)]
    \item preserve rich inter-scale physics information;
    \item prevent intra-scale over-concentration;
    \item ensure spatially balanced coverage of the domain; and 
    \item be efficiently constructible at an industrial scale.
\end{enumerate}

\subsection{Multi-Scale Measure Construction}
\label{sec:method-formulation}
\label{sec:measure-formulation}
To design $\mu^\star$, we first think of aggregating spatially nearby points into cells, which induces a discrete partition of the domain and provides a finite support over which $\mu^\star$ is defined. We then group the cells by their variation levels to form a multi-scale representation of spatial heterogeneity. Under this construction, designing $\mu^\star$ amounts to allocating mass across scales and distributing it across cells within each scale in a balanced manner.

Formally, let $\mathcal{C}$ denote a finite set of spatial cells over which supervision budget is allocated; each cell $c \in \mathcal{C}$ indexes a localized region of the domain, and suppose cells are pairwise disjoint up to measure-zero boundaries.
Let $\Lambda \subseteq \{1,\dots,L_{\max}\}$ be the set of scale labels. The index $\ell \in \Lambda$ is ordered from coarser to finer variation bands, where finer scales capture higher variation and coarser scales correspond to smoother regions. We group the cells into subsets $\{\mathcal{C}_\ell\}_{\ell\in\Lambda}$, where each $\mathcal{C}_\ell \subseteq \mathcal{C}$ collects cells with similar variation magnitude, and $\{\mathcal{C}_\ell\}$ forms a partition of $\mathcal{C}$. Here, cells are purely spatial units, while scale labels are assigned on top of them. For each $\ell\in\Lambda$, let $\alpha_\ell$ denote the overall budget assigned to scale~$\ell$. Let $\mu_\ell$ denote the uniform measure over $\mathcal{C}_\ell$, ensuring no additional preference within each scale.

We then define the following target family:
\begin{equation}
\label{eq:mu-star-ms}
\mu^\star \;=\; \sum_{\ell\in\Lambda} \alpha_\ell \,\mu_\ell,
\qquad
\mu_\ell(c)\;=\;\begin{cases}
|\mathcal{C}_\ell|^{-1}, & c\in\mathcal{C}_\ell,\\
0, & \text{otherwise},
\end{cases}
\qquad
\sum_{\ell\in\Lambda}\alpha_\ell=1.
\end{equation}
The weights $\{\alpha_\ell\}$ control how mass is allocated across scales. We use $\alpha_\ell \approx |\Lambda|^{-1}$ as a simple coverage prior; alternatives such as $\alpha_\ell \propto |\mathcal{C}_\ell|$ can overemphasize dominant scales under hard budgets. In other words, each cell receives mass $\mu^\star(\{c\})=\frac{\alpha_\ell}{|\mathcal{C}_\ell|}$ for $c\in\mathcal{C}_\ell$, which enforces inter-scale coverage through $\{\alpha_\ell\}$ and intra-scale fairness via uniformity within each level. Given such a target measure $\mu^\star$, constructing a discrete empirical measure $\mu_S$ under a hard budget induces an intractable combinatorial allocation problem at scale. To address this, we design a scalable procedure that maintains inter-scale coverage while preventing intra-scale concentration under discrete constraints. We implement this as a three-stage pipeline to build $\mu_S$ shown in the next section. Appendix~\ref{sec:app-measure-shift} provides its motivation based on the decomposition of the discrepancy between $\mu^\star$ and $\mu_S$.

\subsection{Three-Stage Approximation}
\label{sec:construction}
\label{sec:method-overview}
We instantiate the discrete support $\mathcal{C}$ via a variation-adaptive partition of the point cloud $\mathcal{P}$.
Each cell $c \in \mathcal{C}$ aggregates a subset of points and acts as the atomic unit for subsequent measure construction. Based on this partition, we assign scale labels to cells to obtain $\{\mathcal{C}_\ell\}$, and then perform budget allocation to construct the empirical measure $\mu_S$. The pipeline consists of three stages: (i) partition construction, (ii) scale grouping, and (iii) budgeted allocation, as illustrated in Figure~\ref{fig:m3-pipeline}. We detail each step in the following subsections.

\begin{figure}[t]
  \centering
  \includegraphics[width=\linewidth]{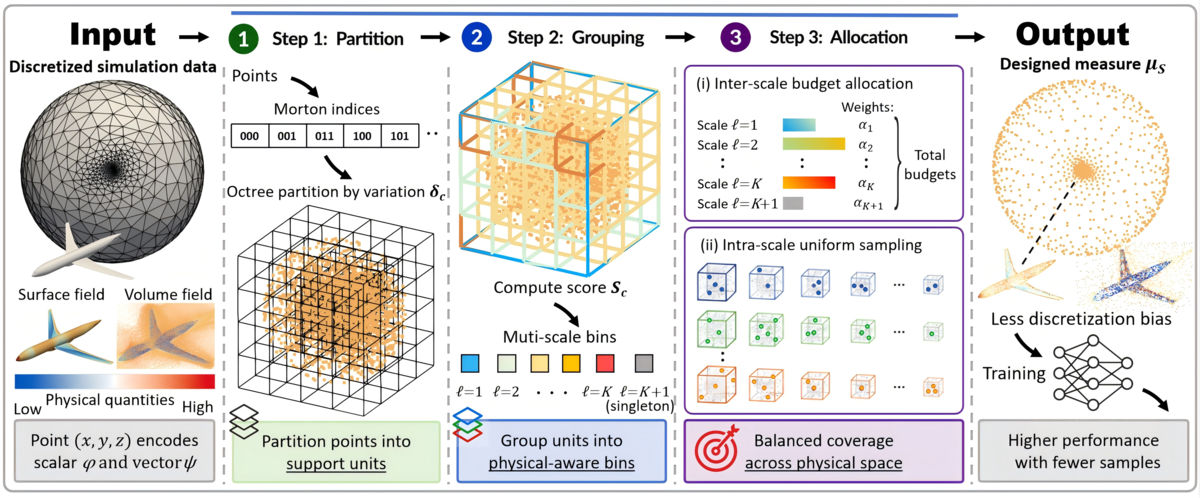}
  \caption{\textbf{Overview of the M$^3$ pipeline.} Starting from a non-uniform discretization, M$^3$ constructs a variation-adaptive Morton partition, groups cells into multi-scale strata using the intensity score $S_c$, and allocates the training budget across scales and cells to obtain a structured empirical training measure $\mu_S$.}
  \label{fig:m3-pipeline}
\end{figure}

\paragraph{\textbf{Step 1: Variation-Aware Partition.}}
\label{sec:method-partition}
\label{sec:method-variation}
Our goal is to partition irregular high-resolution point clouds into coarse support units guided by local physical variation, forming a structured basis for sampling and measure construction. We achieve this via Morton-order~\cite{morton1966zorder,meagher1982octree} octree partitioning, which ensures scalability, preserves spatial locality, and enables efficient indexing during sampling.

Points are first sorted on a lattice-quantized bounding cube. Each cell $c$ has an octree depth $G(c)\in\{0,1,\dots,G_{\max}\}$, defined as the number of successive refinements from the root bounding cell. Cells are recursively subdivided until either (i) a per-cell point cap~$\kappa$ is met, (ii) the depth cap $G(c)\ge G_{\max}$ is met, or (iii) a low-variation condition (defined below) is satisfied.
For each cell $c\in\mathcal{C}$, let $\mathcal{P}_c\subseteq\{1,\dots,N\}$ denote the indices of points in~$c$.

To decide where refinement is needed without relying on global gradient information over all $N$ points, we define a lightweight local variation score $\delta_c$ based on within-cell extrema.
All supervised channels use z-score normalization before computing extrema-based variations, ensuring $\delta_c$ is dimensionless and comparable across fields.
More formally, let $\{\phi^{(k)}\}_{k\in\mathcal{K}}$ denote supervised scalar channels and $\{\boldsymbol{\psi}^{(j)}\}_{j\in\mathcal{J}}$ supervised vector channels (e.g., for CFD, surface uses $(\phi,\boldsymbol{\psi})=(p_s,\boldsymbol{\tau})$ and volume uses $(p_v,\mathbf{u})$).
For a cell $c$, define the scalar range \(\Delta \phi^{(k)}_c := \max_{i\in\mathcal{P}_c}\phi^{(k)}_i-\min_{i\in\mathcal{P}_c}\phi^{(k)}_i\),
and the vector \emph{projection diameter}
\begin{equation}
\label{eq:Delta-v-proj}
\Delta \psi^{(j)}_c \;=\;
\max_{\mathbf{d}\in\mathcal{D}}
\left(
\max_{i\in\mathcal{P}_c} \mathbf{d}^{\top}\boldsymbol{\psi}^{(j)}_i
\;-\;
\min_{i\in\mathcal{P}_c} \mathbf{d}^{\top}\boldsymbol{\psi}^{(j)}_i
\right),
\end{equation}
where $\mathcal{D}$ is a finite set of unit vectors $\mathbf{d}\in\mathbb{R}^3$ used to approximate vector-channel spread by projection; concrete cardinalities and defaults are listed with Algorithm~\ref{alg:physical_gradient_partition} in Appendix~\ref{sec:app-method-details}.
\noindent We aggregate the channelwise variation via
\begin{equation}
\label{eq:delta-score}
\delta_c
=
\max\Bigl(
\max_{k\in\mathcal{K}} w^{(k)}_{\phi}\,\Delta \phi^{(k)}_c,\;
\max_{j\in\mathcal{J}} w^{(j)}_{\psi}\,\Delta \psi^{(j)}_c
\Bigr)
\end{equation}
\noindent The weights $w^{(k)}_{\phi}$ and $w^{(j)}_{\psi}$ tune the relative contribution of each physical quantity to refinement triggering, balancing scalar versus vector channels under a common threshold.
\noindent Cells are recursively subdivided as long as $\delta_c > \varepsilon_{\mathrm{refine}}$, with $\delta_c$ recomputed after each split. As cell diameters shrink, the same extrema-based test is applied over progressively smaller neighborhoods, providing a multiscale, gradient-free probe of local variation.  

The process terminates when all cells satisfy the criterion, yielding finer cells in high-variation regions and coarser cells elsewhere (see cell partitioning results in Appendix~\ref{sec:geom-partition}). Under Lipschitz regularity, these extrema ranges provide a conservative bound on within-cell oscillation at the cell scale, so $\delta_c$ serves as a proxy for local variation intensity. The resulting partition $\mathcal{C}$ defines the discrete support for subsequent measure design. Parameter settings are summarized in Appendix~\ref{sec:app-method-details}, where we also provide the motivation and physical intuition behind their selection.

\paragraph{\textbf{Step 2: Multi-Scale Grouping.}}
\label{sec:method-bucketing}
Purely geometry-driven depth is a poor proxy for local physical variation when cell spatial scales differ. This is because refinement is driven by within-cell extrema: a localized high-gradient region in a large coarse cell can dominate the score and trigger subdivision. As a result, after splitting, only a subset of child cells retain high variation while others become smooth, yet may still appear at large octree depths $G(c)$ since refinement is inherited from parent-level extrema rather than intrinsic variation. Consequently, octree depth $G(c)$ alone may over-refine quiescent regions and fail to separate physically distinct variations. We therefore recalibrate cells based on their actual variation scales.

For each cell $c$ with $|\mathcal{P}_c| > 1$, let $h_c > 0$ denote its geometric width. Using the Step~1 score $\delta_c$ in Equation~\eqref{eq:delta-score}, we define a gradient-scale intensity $\tilde{T}_c$ and score $S_c$ as follows:
\begin{equation}
\label{eq:strata-intensity}
\tilde{T}_c := \frac{\delta_c}{h_c},
\qquad
S_c := \log(\epsilon_{\log}+\tilde{T}_c),
\end{equation}
where $\epsilon_{\log}>0$ is a small constant introduced only for numerical stability when $\tilde{T}_c \approx 0$. We apply log-scaling to mitigate the heavy-tailed distribution of variation scores and improve comparability across cells.
The gradient-like score $S_c$ provides a comparable scalar axis across cells of different sizes. We discretize $\{S_c\}$ into $K \ge 2$ ordered bins using evenly spaced thresholds over its empirical range, and assign each cell a scale label $\ell(c)$ accordingly.

Cells with $|\mathcal{P}_c| = 1$ lack reliable within-cell variation signals and are assigned to a dedicated stratum $\ell = K{+}1$. This induces a partition of the cell set $\mathcal{C}$ into scale strata, $\mathcal{C}_\ell = \{ c \in \mathcal{C} : \ell(c) = \ell \}$ for $\ell \in \Lambda \subseteq \{1,\dots,K{+}1\}$. The resulting stratification provides a stable and comparable notion of variation across heterogeneous cells, serving as the basis for subsequent multi-scale allocation. 

\paragraph{\textbf{Step 3: Budgeted Allocation.}}
\label{sec:method-sampling}
\label{sec:method-algo-overview}
Step~3 constructs $\mu_S$ under a hard integer budget $m$ on the fixed cell support $\mathcal{C}$ and the induced scale strata $\{\mathcal{C}_\ell\}$, as a capacity-constrained approximation to the target $\mu^\star$. In particular, it aims to match the inter-scale allocation specified by $\{\alpha_\ell\}$ while distributing the budget uniformly within each scale under discrete constraints.
Let $N_c := |\mathcal{P}_c|$ denote the number of points in cell $c$. Given a fill ratio $\rho\in(0,1]$, we define the effective capacity
\[
\tilde{N}_c := \min\Bigl(N_c,\;\max\bigl(1,\lceil \rho N_c\rceil\bigr)\Bigr).
\]
The fill ratio $\rho$ controls the per-cell sampling fraction at which its physical content is sufficiently represented, preventing cells with large point counts from exhausting the global budget, thereby reserving capacity for other cells and scales. We seek integer quotas $q_c \in \{0,1,\dots,\tilde{N}_c\}$ with total realized budget $m' := \min\bigl(m,\sum_{c\in\mathcal{C}}\tilde{N}_c\bigr)$ such that $\sum_{c\in\mathcal{C}} q_c = m'$.
The quotas $\{q_c\}$ thus define cell-level masses $\mu_S(\{c\})=q_c/m'$ on $\mathcal{C}$.

To model the multi-scale structure, we perform allocation in two stages. First, we assign integer budgets $\{m_\ell\}$ across scales:
\begin{equation}
\label{eq:level-capacity}
\sum_{\ell\in\Lambda} m_\ell = m',
\qquad
0 \le m_\ell \le C_\ell,
\qquad
C_\ell := \sum_{c\in\mathcal{C}_\ell} \tilde{N}_c,
\end{equation}
where $C_\ell$ denotes the total capacity of scale $\ell$. The budgets $\{m_\ell\}$ are chosen to approximate the target masses (e.g., $m_\ell \approx \alpha_\ell m'$), subject to the capacity limits above.
Given $\{m_\ell\}$, we then allocate per-cell quotas within each scale under the constraints
\begin{equation}
\label{eq:cell-quota-constraints}
\sum_{c\in\mathcal{C}_\ell} q_c = m_\ell,
\qquad
0 \le q_c \le \tilde{N}_c.
\end{equation}
We use a water-filling procedure to balance quotas across cells under capacity limits. Starting from zero allocation, samples are assigned iteratively to cells with remaining capacity, prioritizing those with lower current allocation. The process stops when the level budget is exhausted or all cells reach capacity. For each cell $c$, we sample $q_c$ distinct indices uniformly without replacement from its associated point set and aggregate them into the training set $\mathcal{I}_{\mathrm{tr}}$.
Overall, this procedure enforces inter-scale balance and intra-scale uniformity under finite-sample constraints, yielding an empirical measure $\mu_S$ aligned with the target design.  
Pseudocode for Step~3 is provided in Algorithm~\ref{alg:cell_equal_level_aware_morton_sampling} (Appendix~\ref{sec:app-algorithms}).
\endgroup

\section{Experiments}
\label{sec:experiments}
We design experiments to investigate three key questions:
\begin{enumerate}[itemsep=0pt,topsep=0pt,leftmargin=*,label=(\arabic*)]
\item Does the constructed $\mu_S$ improve spatial coverage and capture more physical information?
\item Does it improve prediction fidelity and physical consistency over the continuous domain?
\item Are these gains preserved under reduced training budgets?
\end{enumerate}
Models trained under the sampling measure constructed by M$^3$ are compared with those trained using uniform random subsampling. 
All models are evaluated at full resolution over the physical domain to ensure a consistent target distribution. 
This setup isolates the effect of the sampling measure from discretization and resolution differences.

\noindent\textbf{Benchmarks.}
We conduct experiments on three representative CFD datasets, namely SHIFT-Wing \cite{luminary2025shiftwing}, AhmedML \cite{ashton2024ahmedml}, and DrivAerML \cite{ashton2024drivaerml}, as summarized in Table~\ref{tab:exp-datasets-overview}. These datasets cover aircraft wings, simplified bluff bodies, and realistic car geometries, with physical fields defined on both 2D surface manifolds and 3D surrounding flows. They span anisotropic, near-uniform, and highly non-uniform discretization regimes. They also exhibit diverse flow characteristics, including transonic flows with shock-induced variations, as well as separated wakes and recirculation in bluff-body and automotive configurations. Together, these properties enable evaluation across a wide range of data distributions and physical regimes.

\begin{table}[t]
  \centering
  \small
  \caption{\textbf{Datasets overview.} Surface and volume columns report mesh points per sample, where $\sim$M denotes millions. Detailed dataset distributions are visualized in Appendix~\ref{sec:dataset-discretization}.}
  \label{tab:exp-datasets-overview}
  \setlength{\tabcolsep}{3pt}
  \resizebox{\linewidth}{!}{
  \begin{tabular}{@{}lcccccc@{}}
    \toprule
    Dataset & \#Cases & Surface & Volume & Refinement Strategy & Discretization Scheme & Size\\
    \midrule
    SHIFT-Wing & 1698 & \(\sim\)3M & \(\sim\)6M & AMR & anisotropic & \(\sim\)2TB \\
    AhmedML & 500 & \(\sim\)1M & \(\sim\)20M & engineering-driven & near-uniform & \(\sim\)8TB \\
    DrivAerML & 500 & \(\sim\)9M & \(\sim\)160M & engineering-driven & non-uniform & \(\sim\)31TB \\
    \bottomrule
  \end{tabular}
  }
\end{table}

\noindent\textbf{Backbone.}
Our work focuses on model behavior under data distribution shifts with identical model capacity and training setup. We use AB-UPT \cite{alkin2025abupt} without architectural modifications. Its decoupled anchor–query design supports training on varying input point sets and inference on arbitrary query tokens, aligning with our full-resolution evaluation protocol.

\begin{figure}[t]
  \centering
  \includegraphics[width=\linewidth]{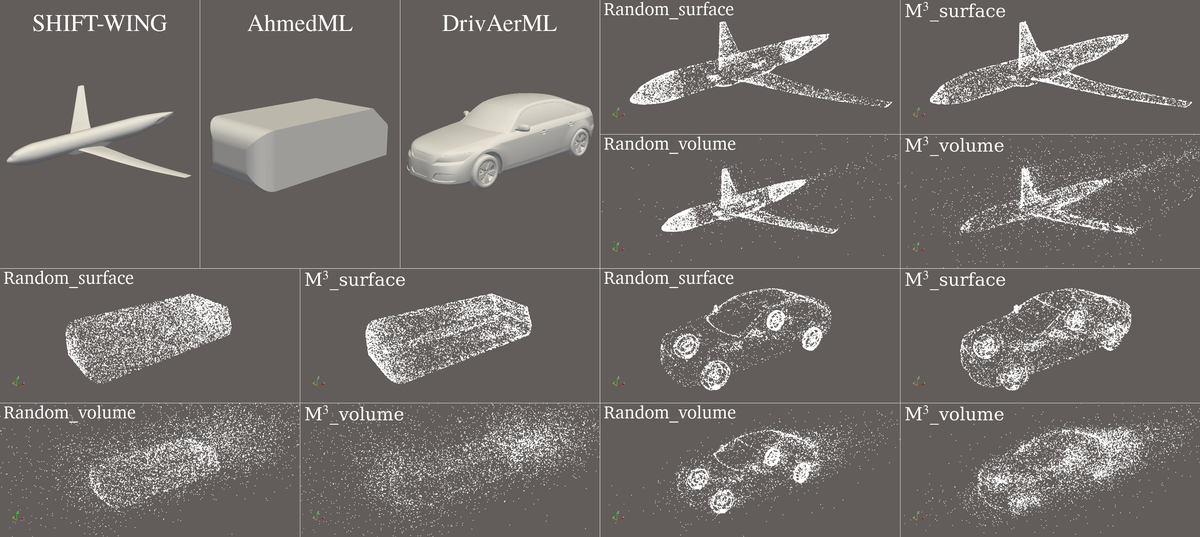}
  \caption{\textbf{Sampling comparison (Random vs.\ M$^3$, 8192 points).}
  Random sampling inherits the density bias of the original simulation data, whereas M$^3$ constructs a physically informed distribution that mitigates excessive concentration in over-resolved regions (e.g., boundary layers) and captures richer physical phenomena. Additional visualizations are provided in Appendix~\ref{sec:drivaerml-step3}.}
  \label{fig:partition-sampling-8192}
\end{figure}

\noindent\textbf{Spatial Coverage and Structural Properties.}
Figure~\ref{fig:partition-sampling-8192} visualizes sampling patterns under the same budget for random subsampling and the M$^3$ pipeline. Random subsampling largely preserves the original CFD mesh distribution and thus inherits its density-induced bias, systematically under-sampling low-density regions where critical flow structures may be missed or underrepresented. Once lost, such information is difficult to recover and is instead approximated through the model's interpolation behavior, which limits the achievable fidelity in predicting the continuous physical system. In contrast, M$^3$ actively reshapes the spatial distribution, yielding more structured sampling on near-uniform meshes (AhmedML), multi-scale coverage on highly non-uniform meshes (DrivAerML), and balanced budget allocation under anisotropic refinement patterns (SHIFT-Wing). This induces a different training measure, yielding a more balanced distribution of supervision across the domain.

\noindent\textbf{Prediction Fidelity and Physical Consistency.}
We perform full-resolution inference on both surface and volumetric fields and report mean errors over 50 held-out cases. All evaluation points are assessed using both equal-weight and physics-weighted metrics. The latter assigns weights to points based on their geometric measure (i.e., surface area $\Delta S$ and volume $\Delta V$), ensuring that each entity contributes proportionally to its physical extent, yielding a measure-consistent comparison over the domain. Accordingly, we report MAE, MSE, and relative $L_2$ errors, along with their physics-weighted variants, all defined under a common weighted form with nonnegative weights $\omega_i$:

\begin{equation}
\begin{aligned}
\mathrm{MSE}_\omega(f)
&=
\frac{\sum_{i\in\mathcal{I}}\lVert f_i-\hat f_i\rVert_2^2\,\omega_i}{\sum_{i\in\mathcal{I}}\omega_i}\,,
\qquad
L_{2,\omega}^{\mathrm{rel}}(f)
=
\frac{\sqrt{\sum_{i\in\mathcal{I}}\lVert f_i-\hat f_i\rVert_2^2\,\omega_i}}
{\sqrt{\sum_{i\in\mathcal{I}} \lVert f_i\rVert_2^2\,\omega_i}}\,.
\label{eq:exp-rel-l2-weighted}
\end{aligned}
\end{equation}

Here $\lVert\cdot\rVert_2$ denotes the Euclidean norm. Replacing the squared norm with $\lVert f_i-\hat f_i\rVert_2$ yields the weighted MAE. The index set $\mathcal{I}$ enumerates evaluation entities, with weights $\omega_i=\Delta S_i$ on $\mathcal{S}$ and $\omega_i=\Delta V_i$ on $\mathcal{V}$. Setting $\omega_i\equiv 1$ recovers the standard unweighted metrics.
We need such importance weighting to examine the performance under physically natural evaluation measures, which may differ from the measure of training samples (see Appendix~\ref{sec:app-measure-eval}).

\newcommand{\hlul}[1]{\begingroup\setlength{\fboxsep}{0.6pt}\colorbox{gray!18}{\strut #1}\endgroup}

\begin{table}[t]
\footnotesize
\setlength{\tabcolsep}{4pt}

\noindent
\begin{minipage}[H]{0.4\textwidth}
\begingroup

\caption{\textbf{Full-resolution evaluation across the datasets.} Models with identical capacity are trained under the same 10\% data budget (M$^3$ vs.\ R, where R denotes uniform random sampling) and evaluated on full-resolution data. Scaling is applied only to AhmedML (MAE $\times 10^{-3}$, MSE $\times 10^{-6}$).}
\label{tab:Evaluation-metrics-on-full-resolution-targets}
\endgroup
\end{minipage}
\hfill
\begin{minipage}[t]{0.58\textwidth}
\centering
\setlength{\tabcolsep}{3pt}
\resizebox{\linewidth}{!}{
\begin{tabular}{p{1.2cm}|*{8}{c}}
\toprule
 & \multicolumn{8}{c}{\textbf{DrivAerML}} \\
\cmidrule(lr){2-9}
\textbf{Metrics} & \multicolumn{2}{c}{$p_s$} & \multicolumn{2}{c}{$\boldsymbol{\tau}$} & \multicolumn{2}{c}{$p_v$} & \multicolumn{2}{c}{$\mathbf{u}$} \\
\cmidrule(lr){2-3}\cmidrule(lr){4-5}\cmidrule(lr){6-7}\cmidrule(lr){8-9}
& \textbf{R} & \textbf{M$^3$} & \textbf{R} & \textbf{M$^3$} & \textbf{R} & \textbf{M$^3$} & \textbf{R} & \textbf{M$^3$} \\
\midrule
\textbf{MAE} & \hlul{8.37} & 8.47 & \hlul{0.0686} & 0.0742 & \hlul{0.0180} & 0.0307 & \hlul{0.503} & 0.829 \\
\textbf{MSE} & 201.6 & \hlul{194.4} & \hlul{0.01788} & 0.01873 & \hlul{0.0011775} & 0.0047718 & \hlul{0.78310} & 2.522 \\
\textbf{$L_2^{\mathrm{rel}}$ (\%)} & 3.92 & \hlul{3.86} & \hlul{7.62} & 7.82 & \hlul{6.42} & 13.12 & \hlul{5.70} & 10.44 \\
\cmidrule(lr){1-9}
\textbf{MAE$_{\omega}$} & 6.33 & \hlul{5.57} & 0.0523 & \hlul{0.0460} & 0.0052 & \hlul{0.0027} & 0.129 & \hlul{0.0780} \\
\textbf{MSE$_{\omega}$} & 119.4 & \hlul{94.1} & 0.02580 & \hlul{0.02058} & 0.0000901 & \hlul{0.0000230} & 0.13408 & \hlul{0.04604} \\
\textbf{$L_{2,\omega}^{\mathrm{rel}}$ (\%)} & 4.36 & \hlul{3.84} & 7.39 & \hlul{6.58} & 0.94 & \hlul{0.47} & 0.94 & \hlul{0.55} \\
\bottomrule
\end{tabular}
}
\end{minipage}

\vspace{0.9em}

\resizebox{\linewidth}{!}{
\begin{tabular}{p{1.2cm}|*{8}{c}|*{8}{c}}
\toprule
 & \multicolumn{8}{c|}{\textbf{AhmedML}} & \multicolumn{8}{c}{\textbf{SHIFT-Wing}} \\
\cmidrule(lr){2-9}\cmidrule(lr){10-17}
\textbf{Metrics} & \multicolumn{2}{c}{$p_s$} & \multicolumn{2}{c}{$\boldsymbol{\tau}$} & \multicolumn{2}{c}{$p_v$} & \multicolumn{2}{c|}{$\mathbf{u}$} & \multicolumn{2}{c}{$p_s$} & \multicolumn{2}{c}{$\boldsymbol{\tau}$} & \multicolumn{2}{c}{$p_v$} & \multicolumn{2}{c}{$\mathbf{u}$} \\
\cmidrule(lr){2-3}\cmidrule(lr){4-5}\cmidrule(lr){6-7}\cmidrule(lr){8-9}
\cmidrule(lr){10-11}\cmidrule(lr){12-13}\cmidrule(lr){14-15}\cmidrule(lr){16-17}
& \textbf{R} & \textbf{M$^3$} & \textbf{R} & \textbf{M$^3$} & \textbf{R} & \textbf{M$^3$} & \textbf{R} & \textbf{M$^3$} & \textbf{R} & \textbf{M$^3$} & \textbf{R} & \textbf{M$^3$} & \textbf{R} & \textbf{M$^3$} & \textbf{R} & \textbf{M$^3$} \\
\midrule
\textbf{MAE} & 2.22 & \hlul{2.07} & \hlul{0.0122} & 0.0148 & 6.07 & \hlul{5.32} & 3.63 & \hlul{3.12} & \hlul{33.97} & 37.17 & \hlul{1.483} & 1.485 & 69.43 & \hlul{67.66} & 17.47 & \hlul{15.49} \\
\textbf{MSE} & 79.5 & \hlul{59.0} & \hlul{0.00311} & 0.00334 & 712 & \hlul{353} & 272 & \hlul{125} & 6970 & \hlul{6585} & \hlul{9.824} & 9.833 & 43613 & \hlul{35936} & 1219 & \hlul{997.4} \\
\textbf{$L_2^{\mathrm{rel}}$ (\%)} & 2.80 & \hlul{2.55} & \hlul{3.75} & 3.99 & 2.16 & \hlul{1.78} & 2.11 & \hlul{1.72} & 0.346 & \hlul{0.342} & \hlul{18.90} & 18.91 & 0.843 & \hlul{0.784} & 43.7 & \hlul{39.5} \\
\cmidrule(lr){1-17}
\textbf{MAE$_{\omega}$} & 2.00 & \hlul{1.88} & \hlul{0.0114} & 0.0146 & 1.75 & \hlul{1.40} & 0.892 & \hlul{0.701} & 39.81 & \hlul{38.92} & 1.134 & \hlul{1.130} & 31.68 & \hlul{18.23} & 1.076 & \hlul{0.3259} \\
\textbf{MSE$_{\omega}$} & 79.6 & \hlul{58.7} & \hlul{0.00836} & 0.00999 & 83.0 & \hlul{31.6} & 56.2 & \hlul{21.5} & 7671 & \hlul{6269} & 19.60 & \hlul{19.54} & 2085 & \hlul{720.4} & 22.94 & \hlul{1.146} \\
\textbf{$L_{2,\omega}^{\mathrm{rel}}$ (\%)} & 2.76 & \hlul{2.52} & \hlul{3.34} & 3.80 & 0.603 & \hlul{0.455} & 0.500 & \hlul{0.378} & 0.360 & \hlul{0.329} & 17.71 & \hlul{17.68} & 0.19 & \hlul{0.11} & 1.890 & \hlul{0.404} \\
\bottomrule
\end{tabular}
}
\end{table}

\noindent Across three representative simulation datasets, models trained on M$^3$-processed data consistently outperform random-subsampling baselines under physics-weighted metrics defined over the continuous physical space. This demonstrates robustness across data scales and discretization regimes. Table~\ref{tab:Evaluation-metrics-on-full-resolution-targets} reports detailed results on SHIFT-Wing ($\sim$6M) and AhmedML ($\sim$20M). M$^3$ achieves lower errors across $p_s$, $p_v$, and $\mathbf{u}$ primarily under physics-weighted evaluations, with reductions of 1.01$\times$--1.11$\times$ and 1.09$\times$--4.68$\times$, while 1.10$\times$--1.23$\times$ and 1.10$\times$--1.33$\times$, respectively, indicating improved approximation of the underlying continuous fields. On the larger DrivAerML dataset ($\sim$160M), however, we observe a metric inversion: random sampling performs better under equal-weighted metrics but worse under physics-weighted evaluation, whereas M$^3$ exhibits the opposite trend.
  \vspace{-5pt}
\begin{figure}[t]
  \centering
  \includegraphics[width=\linewidth]{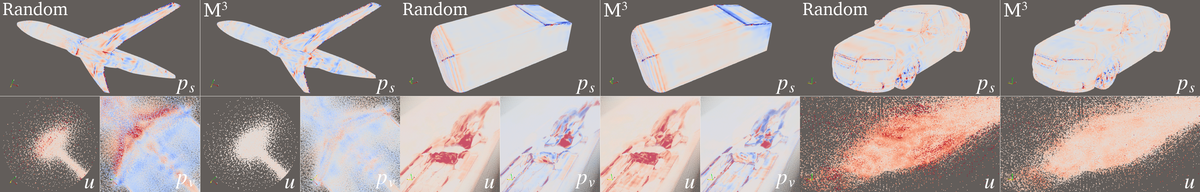}
  \caption{\textbf{Spatial error maps at full resolution.}
  White denotes zero error, while deeper red/blue indicate larger signed deviations. In the panels shown, R-trained models display more localized extremes, whereas M$^3$ often yields smoother error fields. See more corresponding results in Appendix~\ref{sec:app-full-resolution}.}
  \label{fig:error-vis-three-datasets}
  \vspace{-5pt}
\end{figure}

Conventional pointwise-uniform objectives are unbiased in discrete point space but induce a mesh-dependent empirical measure in the continuous domain. High-density regions contribute more samples and therefore receive greater effective weight in the empirical loss, leading to spatially imbalanced learning biased toward dense areas. As a result, predictions tend to be more accurate in such regions while exhibiting larger errors in sparse ones (Figure~\ref{fig:error-vis-three-datasets}). Standard equal-weighted metrics may further mask this imbalance during metric aggregation, potentially overstating performance over the full physical field.

Under full-resolution evaluation, a distribution mismatch emerges between mesh-based queries and the M$^3$-induced measure. Equal-weighted pointwise errors therefore appear larger for M$^3$, as these evaluations effectively correspond to expectations under the mesh-induced empirical distribution. However, this discrepancy reflects a change of measure rather than a degradation in model performance. Consistent with this, Figure~\ref{fig:error-vis-three-datasets} shows that M$^3$ yields more accurate full-domain predictions despite appearing worse under equal-weighted metrics. 
These observations suggest that such pointwise assessments may not faithfully reflect performance in the continuous physical system. Physics-aligned adjustments should be considered for a fairer evaluation.

\noindent\textbf{Data Efficiency at Reduced Training Scales.} 
We further evaluate model fidelity with different training budgets. 
As shown in Table~\ref{tab:exp-scale-drivaerml}, across all physics-weighted metrics, models trained on M$^3$-reconstructed data consistently outperform those trained on unprocessed data of the same scale, demonstrating the benefit of structured sampling over raw discretization. The advantage persists even with less than 10$\times$ training data, where M$^3$-refined models still surpass baselines trained on larger unstructured or even full-resolution datasets. This indicates that improving data quality through structured sampling is more effective than merely increasing the quantity of unstructured volume samples.
Compared to random sampling at the same 10\% data scale ($\times 0.1$/$\times 0.1$), M$^3$ reduces weighted relative error from 0.94 to 0.47 (2.0×) on $p_v$ and from 0.94 to 0.55 (1.7×) on $\mathbf{u}$. Further reducing volumetric data to $\times 0.01$ the original scale (with $\times 0.1$ surface) yields larger gains, with 3.1$\times$–3.6$\times$ improvements (0.94$\rightarrow$0.30 on $p_v$, 0.94$\rightarrow$0.26 on $\mathbf{u}$), and up to 9.7$\times$ and 12.8$\times$ reductions in weighted squared error. 
Figure~\ref{fig:supplement-split-lr} further illustrates these effects, showing that M$^3$ restores more physically consistent flow structures, with surface features closer to the ground truth and volume field exhibiting fewer global error patterns.
\begin{table}[t]
\centering
\small
\newcommand{\wminhl}[1]{\hlul{#1}}
\newcommand{\hldark}[1]{\begingroup\setlength{\fboxsep}{0.6pt}\colorbox{gray!35}{\strut #1}\endgroup}
\newcommand{\hlmid}[1]{\begingroup\setlength{\fboxsep}{0.6pt}\colorbox{gray!24}{\strut #1}\endgroup}
\newcommand{\hllight}[1]{\begingroup\setlength{\fboxsep}{0.6pt}\colorbox{gray!14}{\strut #1}\endgroup}
\caption{\textbf{Data-efficiency comparison on DrivAerML.} Models are trained on subsets of varying sizes and evaluated on the full-resolution point set. $\times x$ denotes the sampling factor, where $\times 1$ denotes inputs from the original full CFD mesh. Colors are used to highlight the three lowest errors.}
\label{tab:exp-scale-drivaerml}
\setlength{\tabcolsep}{2.2pt}
\resizebox{0.95\linewidth}{!}{
\begin{tabular}{@{}c|cc|cccc|cccc|cccc@{}}
\toprule
\textbf{Training} & \textbf{Surf.} & \textbf{Vol.}
& \multicolumn{4}{c|}{MAE}
& \multicolumn{4}{c|}{MSE}
& \multicolumn{4}{c}{$L_2^{\mathrm{rel}}$ (\%)} \\
\cmidrule(lr){2-2}\cmidrule(lr){3-3}\cmidrule(lr){4-15}
\textbf{measure} & {\ttfamily\char`\~}9M & {\ttfamily\char`\~}160M
& $p_s$ & $\boldsymbol{\tau}$ & $p_v$ & $\mathbf{u}$
& $p_s$ & $\boldsymbol{\tau}$ & $p_v$ & $\mathbf{u}$
& $p_s$ & $\boldsymbol{\tau}$ & $p_v$ & $\mathbf{u}$ \\
\midrule
\textbf{R} & $\times 1$ & $\times 0.1$ & \hldark{\underline{6.94}} & \hldark{\underline{0.0617}} & \hldark{\underline{0.0158}} & \hldark{\underline{0.449}} & \hldark{\underline{146.6}} & \hldark{\underline{0.01319}} & \hldark{\underline{0.0009106}} & \hldark{\underline{0.59307}} & \hldark{\underline{3.30}} & \hldark{\underline{6.51}} & \hldark{\underline{5.61}} & \hldark{\underline{4.92}} \\
\textbf{R} & $\times 0.1$ & $\times 0.1$ & \hllight{8.37} & \hlmid{0.0686} & \hlmid{0.0180} & \hlmid{0.503} & \hllight{201.6} & \hllight{0.01788} & \hlmid{0.0011775} & \hlmid{0.78310} & \hllight{3.92} & \hllight{7.62} & \hlmid{6.42} & \hlmid{5.70} \\
\textbf{R} & $\times 0.01$ & $\times 0.01$ & \hlmid{8.36} & \hllight{0.0686} & \hllight{0.0181} & \hllight{0.505} & 202.2 & \hlmid{0.01775} & \hllight{0.0012043} & \hllight{0.78712} & 3.93 & \hlmid{7.60} & \hllight{6.49} & \hllight{5.71} \\
\midrule
\textbf{M$^3$} & $\times 0.1$ & $\times 0.1$ & 8.47 & 0.0742 & 0.0307 & 0.829 & \hlmid{194.4} & 0.01873 & 0.0047718 & 2.522 & \hlmid{3.86} & 7.82 & 13.12 & 10.44 \\
\textbf{M$^3$} & $\times 0.1$ & $\times 0.01$ & 9.19 & 0.0792 & 0.0403 & 1.13 & 217.2 & 0.02084 & 0.0083998 & 5.155 & 4.10 & 8.27 & 17.46 & 15.05 \\
\textbf{M$^3$} & $\times 0.01$ & $\times 0.01$ & 10.00 & 0.0893 & 0.0415 & 1.15 & 268.3 & 0.02741 & 0.0088041 & 5.134 & 4.56 & 9.48 & 17.87 & 15.00 \\
\bottomrule
\end{tabular}
}

\vspace{0.35em}

\resizebox{0.95\linewidth}{!}{
\begin{tabular}{@{}c|cc|cccc|cccc|cccc@{}}
\toprule
\textbf{Training} & \textbf{Surf.} & \textbf{Vol.}
& \multicolumn{4}{c|}{MAE$_\omega$}
& \multicolumn{4}{c|}{MSE$_\omega$}
& \multicolumn{4}{c}{$L_{2,\omega}^{\mathrm{rel}}$ (\%)} \\
\cmidrule(lr){2-2}\cmidrule(lr){3-3}\cmidrule(lr){4-15}
\textbf{measure} & {\ttfamily\char`\~}9M & {\ttfamily\char`\~}160M
& $p_s$ & $\boldsymbol{\tau}$ & $p_v$ & $\mathbf{u}$
& $p_s$ & $\boldsymbol{\tau}$ & $p_v$ & $\mathbf{u}$
& $p_s$ & $\boldsymbol{\tau}$ & $p_v$ & $\mathbf{u}$ \\
\midrule
\textbf{R} & $\times 1$ & $\times 0.1$ & \hlmid{5.81} & \hllight{0.0499} & 0.0052 & 0.125 & \hlmid{98.6} & \hlmid{0.02217} & 0.0000735 & 0.11781 & \hlmid{3.95} & \hlmid{6.86} & 0.85 & 0.88 \\
\textbf{R} & $\times 0.1$ & $\times 0.1$ & 6.33 & 0.0523 & 0.0052 & 0.129 & 119.4 & 0.02580 & 0.0000901 & 0.13408 & 4.36 & 7.39 & 0.94 & 0.94 \\
\textbf{R} & $\times 0.01$ & $\times 0.01$ & 6.37 & 0.0527 & 0.0052 & 0.131 & 122.4 & 0.02640 & 0.0000881 & 0.13363 & 4.41 & 7.47 & 0.93 & 0.94 \\
\midrule
\textbf{M$^3$} & $\times 0.1$ & $\times 0.1$ & \hldark{\underline{5.57}} & \hldark{\underline{0.0460}} & \hllight{0.0027} & \hllight{0.0780} & \hldark{\underline{94.1}} & \hldark{\underline{0.02058}} & \hllight{0.0000230} & \hllight{0.04604} & \hldark{\underline{3.84}} & \hldark{\underline{6.58}} & \hllight{0.47} & \hllight{0.55} \\
\textbf{M$^3$} & $\times 0.1$ & $\times 0.01$ & \hllight{6.28} & \hlmid{0.0487} & \hldark{\underline{0.0015}} & \hldark{\underline{0.0325}} & \hllight{106.3} & \hllight{0.02224} & \hldark{\underline{0.0000093}} & \hldark{\underline{0.01046}} & \hllight{4.11} & \hllight{6.86} & \hldark{\underline{0.30}} & \hldark{\underline{0.26}} \\
\textbf{M$^3$} & $\times 0.01$ & $\times 0.01$ & 6.48 & 0.0507 & \hlmid{0.0015} & \hlmid{0.0326} & 113.6 & 0.02413 & \hlmid{0.0000094} & \hlmid{0.01100} & 4.27 & 7.17 & \hlmid{0.30} & \hlmid{0.27} \\
\bottomrule
\end{tabular}
}
\end{table}

\begin{figure}[t]
  \centering
  \includegraphics[width=\linewidth]{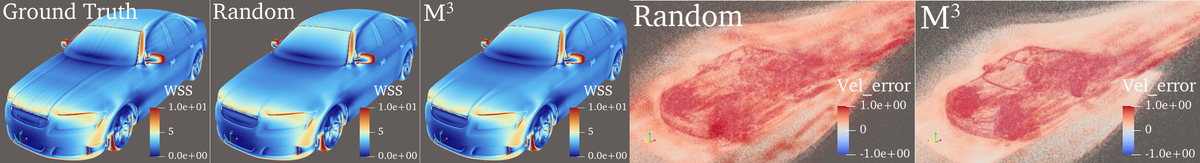}
  \caption{\textbf{Full-resolution predictions for models trained under different data scales.} Surface fields (wall shear stress, WSS): Random ($\times 1$/$\times 0.1$) vs.\ M$^3$ ($\times 0.1$/$\times 0.1$). Volume fields (velocity): Random ($\times 0.1$/$\times 0.1$) vs.\ M$^3$ ($\times 0.1$/$\times 0.01$).}
  \label{fig:supplement-split-lr}
\end{figure}

These results challenge the common intuition that performance scales primarily with data volume, suggesting that structured sampling can improve data efficiency and lead to better model performance.

\section{Discussion}
\label{Discussion}
We demonstrate that measure-induced bias governs the effective objective learned by neural operators. Random sampling inherits discretization bias, resulting in spatially imbalanced learning across the continuous domain. M$^3$ reshapes the data distribution to enhance multi-scale spatial coverage, yielding more balanced learning and improved performance with fewer samples. These findings suggest that the training measure remains critical even with abundant data and compute. It is key to reducing bias and ensuring physically consistent approximation of the underlying continuum.

A limitation of M$^3$ is that it introduces several hyperparameters governing the sampling behavior through its partitioning, stratification, and allocation procedure. While this design provides flexibility in shaping the data distribution, it also enlarges the space of induced measures, making it nontrivial to determine which configuration best reflects the underlying physical characteristics. As a result, appropriate parameter ranges may vary across datasets, and some dataset-specific calibration is typically required. A practical strategy is to perform a small pilot run to identify suitable settings before processing the full dataset. As such preprocessing is generally unavoidable in large-scale simulation workflows, M$^3$ provides additional opportunities to improve data quality, making this limited calibration practically feasible.

\section*{Acknowledgment}
NT was supported by JST PRESTO JPMJPR24T6 and JSPS KAKENHI JP25H01454 and JP26K02968.


\bibliographystyle{unsrtnat}
\bibliography{references}

\newpage

\appendix




\section{Measure-Theoretic Formulation of Subsampled Learning}
\label{sec:app-measure-theory}

\subsection{Training Measures under Subsampling}
\label{sec:app-training-measure}

Subsampling induces a discrete training measure $\mu_S$ over labeled simulation tuples, which fully determines the empirical risk.
Let $u:\mathbb{R}^3\rightarrow\mathbb{R}^d$ denote the target field and let $f_\theta:\mathbb{R}^3\rightarrow\mathbb{R}^d$ be the surrogate model.
The model training algorithm procedure minimizes
\begin{equation}
\label{eq:risk-squared-app}
\mathcal{R}_{\mu_S}(\theta)
:=
\mathbb{E}_{\mathbf{x}\sim\mu_S}
\bigl[\|f_\theta(\mathbf{x})-u(\mathbf{x})\|_2^2\bigr].
\end{equation}
The training measure can be written in terms of an index multiset $\mathcal{I}_{\mathrm{tr}}$ with budget $m:=|\mathcal{I}_{\mathrm{tr}}|$:
\begin{equation}
\label{eq:mu-S-multiset-app}
\mu_S \;=\; \frac{1}{m}\sum_{i\in\mathcal{I}_{\mathrm{tr}}}\delta_{\mathbf{x}_i}.
\end{equation}
Grouping repeated indices in the multiset yields an equivalent weighted representation
\begin{equation}
\label{eq:mu-S-weighted-app}
\mu_S = \sum_{i=1}^{m} w_i\,\delta_{\mathbf{x}_i},
\qquad w_i\ge 0,\quad \sum_{i=1}^{m} w_i=1,
\end{equation}
where the weights encode sampling multiplicities. Under squared loss,
\begin{equation}
\label{eq:risk-empirical-weighted}
\mathcal{R}_{\mu_S}(\theta)
=
\int \|f_\theta(\mathbf{x})-u(\mathbf{x})\|_2^2\,d\mu_S(\mathbf{x})
=
\sum_{i=1}^{m} w_i\,\|f_\theta(\mathbf{x}_i)-u(\mathbf{x}_i)\|_2^2.
\end{equation}
Thus, subsampling directly defines $\mu_S$, and different sampling strategies induce different empirical objectives.
Over-allocation of mass biases the solution toward certain regions, while under-sampled regions remain under-optimized.
Changing $\mu_S$ changes the approximation bias across the domain.

Attention performs conditional reweighting over a fixed sampled support, whereas $\mu_S$ determines which locations enter the empirical risk.
Let $\mu_{\mathrm{attn}}(i\mid x)$ denote the attention weight on the $i$-th token for query $x$.
This can be regarded as a conditional distribution over sampled indices and cannot assign mass outside $\mathrm{supp}(\mu_S)$.
Thus, attention redistributes information within the sampled set but cannot compensate for missing regions.
The training objective remains supported on $\mu_S$, and regions with negligible mass receive little gradient signal, leading to localized errors.

\subsection{Evaluation Measures}
\label{sec:app-measure-eval}

Comparisons of performance are not meaningful unless evaluation is performed under a common target measure.
Suppose evaluation is defined with respect to a measure $\mu_{\mathrm{eval}}$ over spatial locations.
This induces a \emph{measure mismatch}: training minimizes $\mathcal{R}_{\mu_S}$, while reported errors correspond to
\begin{equation}
\mathcal{R}_{\mu_{\mathrm{eval}}}(\theta_S^*)
:=
\mathbb{E}_{\mathbf{x}\sim\mu_{\mathrm{eval}}}
\bigl[\|f_{\theta_S^*}(\mathbf{x})-u(\mathbf{x})\|_2^2\bigr].
\end{equation}
In practice, different evaluation protocols correspond to different choices of $\mu_{\mathrm{eval}}$. 
On discrete CFD meshes, uniform node averaging induces a mesh-based measure $\mu_{\mathrm{mesh}}$, whereas physically meaningful quantities are defined with respect to volume or surface measures. 
With cell volumes $\Delta V_i$ and face areas $\Delta S_i$, these correspond to weighted measures $\mu_{\mathrm{vol}}(i)\propto \Delta V_i$ and $\mu_{\mathrm{surf}}(i)\propto \Delta S_i$. 

In practice, if samples are drawn from $\mu$ while the evaluation is defined with respect to $\mu_{\mathrm{eval}}$, importance weighting yields
\begin{equation}
    \hat{\mathcal{R}}^{\mathrm{IS}}_m(\theta)
    :=
    \frac{1}{m}\sum_{i=1}^{m}
    \frac{d\mu_{\mathrm{eval}}}{d\mu}(\mathbf{x}_i)\,
    \|f_\theta(\mathbf{x}_i)-u(\mathbf{x}_i)\|_2^2,
    \end{equation}
which is unbiased for $\mathcal{R}_{\mu_{\mathrm{eval}}}(\theta)$ when $\mu_{\mathrm{eval}}$ is absolutely continuous with respect to $\mu$.
\subsection{Structured TV Decomposition}
\label{sec:app-measure-shift}

We analyze how a realized training measure $\mu_S$ may deviate from the target $\mu^\star$ in Equation \eqref{eq:mu-star-ms}.
We quantify the deviation by the total variation (TV).
The TV between distributions $\mu$ and $\nu$ is
\[
\mathrm{TV}(\mu,\nu):=\sup_A |\mu(A)-\nu(A)|.
\] 
Using total variation is reasonable as it can directly bound the risk gap:
\begin{proposition}
\label{thm:sampling-risk-gap}
    Assume the pointwise loss $\ell_\theta(\mathbf{x}) := \|f_\theta(\mathbf{x})-u(\mathbf{x})\|_2^2$ is bounded by $0\le \ell_\theta(\mathbf{x})\le M$ on the joint support.
    Then for any $\theta$,
    \[
    \bigl|\mathcal{R}_{\mu}(\theta)-\mathcal{R}_{\mu^\star}(\theta)\bigr|
    \le
    2M\,\mathrm{TV}(\mu,\mu^\star).
    \]
\end{proposition}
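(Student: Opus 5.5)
The plan is to write the risk gap as the integral of the bounded loss against the signed measure $\mu-\mu^\star$. Then bound that integral using the supremum characterization of total variation. Fix $\theta$ and write $\ell=\ell_\theta$. By Equation~\eqref{eq:risk-squared-app}, $\mathcal{R}_\mu(\theta)-\mathcal{R}_{\mu^\star}(\theta)=\int \ell\,d(\mu-\mu^\star)$.

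I would take the Hahn--Jordan decomposition of the signed measure $\mu-\mu^\star$: choose a measurable set $A^+$ on which $\mu-\mu^\star\ge 0$, with complement $A^-$ on which it is $\le 0$. Since $\mu$ and $\mu^\star$ are both probability measures,
\[
(\mu-\mu^\star)(A^+) = -(\mu-\mu^\star)(A^-) = \mathrm{TV}(\mu,\mu^\star).
\]
In our discrete setting (finite supports of $\mu_S$ and $\mu^\star$ on cells/points), $A^+$ is simply the set of atoms where $\mu$ has more mass, so no measure-theoretic subtlety arises. Splitting the integral and using $0\le \ell\le M$ on the joint support gives
\[
\int_{A^+}\ell\,d(\mu-\mu^\star)\in[0,\,M\,\mathrm{TV}]
\quad\text{and}\quad
\int_{A^-}\ell\,d(\mu-\mu^\star)\in[-M\,\mathrm{TV},\,0].
\]
Hence the total lies in $[-M\,\mathrm{TV},\,M\,\mathrm{TV}]$, which already yields $|\mathcal{R}_\mu-\mathcal{R}_{\mu^\star}|\le M\,\mathrm{TV}(\mu,\mu^\star)$. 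This is stronger than the stated bound, since $M\,\mathrm{TV}\le 2M\,\mathrm{TV}$.

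Alternatively, and more directly matching the factor $2$, one can avoid nonnegativity and argue crudely:
\[
\Bigl|\int \ell\,d(\mu-\mu^\star)\Bigr| \le \|\ell\|_\infty\,|\mu-\mu^\star|(\Omega) = M\cdot 2\,\mathrm{TV}(\mu,\mu^\star).
\]
This uses the identity that the total variation norm of the signed measure equals twice the sup-over-sets TV. I would present this version since it needs only boundedness $|\ell|\le M$, and then remark on the sharper constant.

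The only step needing care is the identity $|\mu-\mu^\star|(\Omega)=2\sup_A|\mu(A)-\mu^\star(A)|$. It follows from the Hahn decomposition together with equal total masses: $(\mu-\mu^\star)(\Omega)=0$ forces the positive and negative parts to have equal mass, and the supremum over $A$ is attained at $A^+$. I would also note that the bound "on the joint support" suffices, because both integrals only see $\mathrm{supp}(\mu)\cup\mathrm{supp}(\mu^\star)$.
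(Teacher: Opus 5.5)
Your proposal is correct, and the version you say you would present, bounding $\bigl|\int \ell_\theta\,d(\mu-\mu^\star)\bigr|$ by $M\,|\mu-\mu^\star|(\Omega)$ and using $|\mu-\mu^\star|(\Omega)=2\,\mathrm{TV}(\mu,\mu^\star)$, is exactly the paper's one-line proof. Your Hahn-decomposition remark is also valid and goes further than the paper: because $0\le\ell_\theta\le M$, the gap is a difference of two numbers each in $[0,M\,\mathrm{TV}(\mu,\mu^\star)]$, which yields the sharper constant $M\,\mathrm{TV}(\mu,\mu^\star)$, whereas the paper's argument only uses $|\ell_\theta|\le M$ and so does not obtain it.
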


\begin{proof}
    Write $\mathcal{R}_\mu(\theta)=\int \ell_\theta\,d\mu$ and apply $|\int \ell_\theta\,d(\mu-\mu^\star)|\le M\|\mu-\mu^\star\|_1$, with $\|\mu_1-\mu_2\|_1=2\,\mathrm{TV}(\mu_1,\mu_2)$.
\end{proof}

An idealized formulation of the design of $\mu_S$ under budget $m$ is
\[
    \mu_S = \arg \min_{\mu \in \mathcal{M}_m}\ \mathrm{TV}(\mu, \mu^\star),
\]
where $\mathcal{M}_m$ denotes empirical measures induced by selecting $m$ samples.
This problem is combinatorial; we therefore introduce a tractable surrogate that separates inter- and intra-scale errors.
As a motivation to do so, below we show $\mathrm{TV}(\mu,\mu^\star)$ decomposes into inter-scale mass gaps $|\mu(\mathcal{C}_\ell)-\alpha_\ell|$ (scale budgets Equation~\eqref{eq:level-capacity}) and intra-scale imbalance within each $\mathcal{C}_\ell$ (cell quotas Equation~\eqref{eq:cell-quota-constraints}).

On the finite cell set~$\mathcal{C}$, a measure $\mu$ is identified with its mass vector $p=(p_c)_{c\in\mathcal{C}}$, where $p_c:=\mu(\{c\})$ and $\sum_c p_c=1$.
Suppose another measure $\nu$ with mass vector $q$.
The variational definition of $\mathrm{TV}$ reduces to the discrete identity
\begin{equation}
\label{eq:tv-l1-leaves}
\mathrm{TV}(\mu,\nu)=\frac{1}{2}\sum_{c\in\mathcal{C}}|p_c-q_c|.
\end{equation}
The TV between $\mu$ and $\mu^\star$ can be decomposed as follows:
\begin{proposition}
    \label{lem:cell-tv-decomp}
    Let $\mu$ be a measure identified with its mass vector $p = (p_c)_{c \in \mathcal{C}}$ for cell set $\mathcal{C}$.
    Then,
    \begin{equation}
        \label{eq:tv-two-term}
        2\,\mathrm{TV}(\mu,\mu^\star)
        \;\le\;
        \sum_{\ell\in\Lambda}\bigl|\mu(\mathcal{C}_\ell)-\alpha_\ell\bigr|
        +
        \sum_{\ell\in\Lambda}\sum_{c\in\mathcal{C}_\ell}\Bigl|p_c-\frac{\mu(\mathcal{C}_\ell)}{|\mathcal{C}_\ell|}\Bigr|.
    \end{equation}    
\end{proposition}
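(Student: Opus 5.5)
We start from the discrete identity \eqref{eq:tv-l1-leaves}, which gives $2\,\mathrm{TV}(\mu,\mu^\star)=\sum_{c\in\mathcal{C}}|p_c-p^\star_c|$, where $p^\star_c=\mu^\star(\{c\})=\alpha_\ell/|\mathcal{C}_\ell|$ for $c\in\mathcal{C}_\ell$ by \eqref{eq:mu-star-ms}. Since $\{\mathcal{C}_\ell\}_{\ell\in\Lambda}$ partitions $\mathcal{C}$, the sum splits as $\sum_{\ell\in\Lambda}\sum_{c\in\mathcal{C}_\ell}|p_c-\alpha_\ell/|\mathcal{C}_\ell||$. We assume each $\mathcal{C}_\ell$ is nonempty for $\ell\in\Lambda$; otherwise $\mu_\ell$ is undefined. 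Any scale with $\mathcal{C}_\ell=\emptyset$ can be discarded from $\Lambda$ beforehand.

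The key step is to insert the intermediate quantity $\mu(\mathcal{C}_\ell)/|\mathcal{C}_\ell|$. This is the mass $\mu$ would give each cell if it kept its own scale mass but spread it uniformly within the scale. For each $c\in\mathcal{C}_\ell$ the triangle inequality gives
\[
\Bigl|p_c-\tfrac{\alpha_\ell}{|\mathcal{C}_\ell|}\Bigr|\le \Bigl|p_c-\tfrac{\mu(\mathcal{C}_\ell)}{|\mathcal{C}_\ell|}\Bigr| + \tfrac{|\mu(\mathcal{C}_\ell)-\alpha_\ell|}{|\mathcal{C}_\ell|}.
\]
Summing over $c\in\mathcal{C}_\ell$, the second term does not depend on $c$. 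It contributes exactly $|\mathcal{C}_\ell|\cdot|\mu(\mathcal{C}_\ell)-\alpha_\ell|/|\mathcal{C}_\ell|=|\mu(\mathcal{C}_\ell)-\alpha_\ell|$. Summing then over $\ell\in\Lambda$ yields the two terms of \eqref{eq:tv-two-term}: the inter-scale mass gaps and the intra-scale imbalance.

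There is no real obstacle here; the argument is a single triangle inequality. The only points needing care are bookkeeping. First, that $\mu(\mathcal{C}_\ell)=\sum_{c\in\mathcal{C}_\ell}p_c$, so the intermediate vector is well defined. Second, that the partition property ensures no cell is counted twice or omitted. One may remark that the bound is tight when, within each scale, the signs of the deviations align, e.g.\ when $\mu$ is uniform within each scale, in which case the second term vanishes.
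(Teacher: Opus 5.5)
Your proof is correct and is essentially the paper's proof: start from $2\,\mathrm{TV}(\mu,\mu^\star)=\sum_{c}|p_c-\mu^\star(\{c\})|$, insert the within-scale average $\mu(\mathcal{C}_\ell)/|\mathcal{C}_\ell|$ via the triangle inequality, then sum over $c\in\mathcal{C}_\ell$ and over $\ell\in\Lambda$. One small correction to your optional tightness remark: the deviations $p_c-\mu(\mathcal{C}_\ell)/|\mathcal{C}_\ell|$ sum to zero within each scale, so in any scale with a nonzero mass gap, ``aligned signs'' forces $\mu$ to be uniform on that scale; uniformity within scales is therefore the equality case rather than just an example of it.
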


\begin{proof}
    From the definition in Equation \eqref{eq:mu-star-ms}, $\mu^\star$ assigns mass $\alpha_\ell$ to each stratum and is uniform within strata,
    \[
        \mu^\star(\{c\}) = \frac{\alpha_\ell}{|\mathcal{C}_\ell|}, \quad c\in\mathcal{C}_\ell.
    \]

    Fix $\ell$ and define $\bar p:=\mu(\mathcal{C}_\ell)/|\mathcal{C}_\ell|$.
    By the triangle inequality,
    \[
    \Bigl|p_c-\frac{\alpha_\ell}{|\mathcal{C}_\ell|}\Bigr|
    \le
    |p_c-\bar p|
    +
    \Bigl|\bar p-\frac{\alpha_\ell}{|\mathcal{C}_\ell|}\Bigr|.
    \]
    Summing over $c\in\mathcal{C}_\ell$ gives
    \[
    \sum_{c\in\mathcal{C}_\ell}\Bigl|p_c-\frac{\alpha_\ell}{|\mathcal{C}_\ell|}\Bigr|
    \le
    \sum_{c\in\mathcal{C}_\ell}|p_c-\bar p|
    +
    \bigl|\mu(\mathcal{C}_\ell)-\alpha_\ell\bigr|.
    \]
    Summing over $\ell$ and using Equation~\eqref{eq:tv-l1-leaves} gives Equation~\eqref{eq:tv-two-term}.
\end{proof}

The decomposition in Equation \eqref{eq:tv-two-term} aligns directly with the two design objectives of the proposed method: inter-scale mass matching and intra-scale uniformity.

\clearpage

\section{Implementation Details of M\texorpdfstring{$^3$}{3}}
\label{sec:app-partition-allocation}

\subsection{Algorithms for Multi-scale Measure Construction and Sampling}
\label{sec:app-algorithms}

This appendix provides pseudocode for the three-stage pipeline. 
Algorithm~\ref{alg:physical_gradient_partition} presents a unified formulation of variation-adaptive partitioning and scale stratification. 
Algorithm~\ref{alg:cell_equal_level_aware_morton_sampling} implements capacity-aware level allocation and within-cell sampling.

\begin{algorithm}[H]
\small
\caption{\textbf{Partition and grouping}: variation-adaptive octree and scale stratification.}
\label{alg:physical_gradient_partition}

\begin{algorithmic}[1]

\Require Labeled point cloud $\mathcal{P}$; scoring tuple $(\mathcal{D},\{w^{(k)}_{\phi}\},\{w^{(j)}_{\psi}\})$; octree limits $(\varepsilon_{\mathrm{refine}},G_{\max},\kappa)$; stratification tuple $(K,\epsilon_{\log})$ with optional trim $(p_{\mathrm{lo}},p_{\mathrm{hi}})$.
  \Ensure Cell set $\mathcal{C}$ with index sets $\{\mathcal{P}_c\}$, labels $\ell(c)$, and strata $\{\mathcal{C}_\ell\}$.
  
  \Statex \textbf{Step 1: Variation-adaptive partition (Morton octree).}
  \State Build a Morton-ordered (Z-order) octree over $\mathcal{P}$ and initialize a queue of active cells.
  \While{queue not empty}
    \State Pop cell $c$ with point indices $\mathcal{P}_c$.
    \If{$|\mathcal{P}_c|\le\kappa$ \textbf{or} $G(c)\ge G_{\max}$}
      \State add $c$ to $\mathcal{C}$
    \Else
      \State compute pooled variation score $\delta_c$ for cell $c$
      \If{$\delta_c\le\varepsilon_{\mathrm{refine}}$}
        \State add $c$ to $\mathcal{C}$
      \Else
        \State subdivide $c$ and enqueue nonempty children
      \EndIf
    \EndIf
  \EndWhile
  
  \Statex \textbf{Step 2: Scale stratification.}
  \ForAll{$c\in\mathcal{C}$ with $|\mathcal{P}_c|>1$}
    \State compute intensity coordinate $S_c$ from $\delta_c$, geometric width $h_c$, and $\epsilon_{\log}$
  \EndFor
  \State assign labels $\ell(c)\in\{1,\ldots,K\}$ by equal-width bins on $\{S_c\}$ (optionally after trimming with $(p_{\mathrm{lo}},p_{\mathrm{hi}})$); for $|\mathcal{P}_c|=1$, set $\ell(c)\leftarrow K{+}1$
  
  \State \Return $\mathcal{C}$
  
  \end{algorithmic}
\end{algorithm}

\begin{algorithm}[h]
  \small
  \caption{\textbf{Multi-scale sampling via budgeted capacity-constrained allocation}}
  \label{alg:cell_equal_level_aware_morton_sampling}
  \begin{algorithmic}[1]
  
  \Require Cell groups $\{\mathcal{C}_\ell\}_{\ell\in\Lambda}$ with point sets $\{\mathcal{P}_c\}$, $N_c = |\mathcal{P}_c|$; budget $m$; fill ratio $\rho \in (0,1]$
  \Ensure Training index set $\mathcal{I}_{\mathrm{tr}}$
  
  \Statex \textbf{Capacity computation}
  \State $\tilde{N}_c \leftarrow \min(N_c, \max(1, \lceil \rho N_c \rceil))$
  \State $C_\ell \leftarrow \sum_{c\in\mathcal{C}_\ell} \tilde{N}_c$
  \State $m' \leftarrow \min\!\left(m, \sum_{\ell \in \Lambda} C_\ell \right)$
  
  \If{$m' = 0$}
      \State \Return $\emptyset$
  \EndIf
  
  \Statex \textbf{Stage 1: Inter-level allocation}
  \State Compute $\{m_\ell\}$ via near-uniform allocation under capacity constraints, such that $\sum_{\ell\in\Lambda} m_\ell = m'$ and $0 \le m_\ell \le C_\ell$
  
  \Statex \textbf{Stage 2: Intra-level allocation and sampling}
  \State $\mathcal{I}_{\mathrm{tr}} \leftarrow \emptyset$
  \ForAll{$\ell \in \Lambda$ with $m_\ell > 0$}
      \State Solve $\{q_c\}$ via within-level water-filling, such that $\sum_{c\in\mathcal{C}_\ell} q_c = m_\ell$ and $0 \le q_c \le \tilde{N}_c$
      \ForAll{$c \in \mathcal{C}_\ell$ with $q_c > 0$}
          \State Sample $q_c$ points uniformly without replacement from $\mathcal{P}_c$
          \State Append sampled indices to $\mathcal{I}_{\mathrm{tr}}$
      \EndFor
  \EndFor
  
  \State \Return $\mathcal{I}_{\mathrm{tr}}$
  
  \end{algorithmic}
  \end{algorithm}

\subsection{Hyperparameters and Design Rationale}
\label{sec:app-implementation}
\label{sec:app-method-details}
This subsection provides a detailed account of the hyperparameter choices throughout the pipeline, together with the physical motivations that guide their design.

\paragraph{Depth Control and Refinement Balance ($G_{\max}$).}
The depth cap $G_{\max}$ controls the maximum number of recursive subdivisions in the variation-aware Morton partition and therefore directly limits the finest geometric scale that Step~1 can represent. Its role is not to prescribe refinement everywhere, but to act as a global ceiling that prevents unbounded subdivision when local variation remains high.

In practice, boundary and volume branches adopt separate depth caps to reflect their different computational costs and typical cell counts, but the underlying principle is the same: $G_{\max}$ should be large enough to resolve physically relevant structures, yet not so large that it introduces excessive fragmentation of the support $\mathcal{C}$. If $G_{\max}$ is set too small, refinement is prematurely halted by depth rather than variation, causing many heterogeneous regions to be merged into coarse cells; this collapses the dynamic range of $\delta_c$, and subsequently compresses the distribution of $S_c$, weakening the multi-scale separation in Step~2. Conversely, an excessively large $G_{\max}$ does not uniformly increase resolution, since subdivision is still gated by the variation criterion $\delta_c > \varepsilon_{\mathrm{refine}}$. It enables deep refinement in a small subset of high-variation regions, leading to a rapid growth in cell count and preprocessing cost, and potentially over-representing fine-scale cells in the support before allocation. 

In this sense, $G_{\max}$ interacts with $\varepsilon_{\mathrm{refine}}$ and minimum-count constraints as a structural limiter: increasing $G_{\max}$ only affects cells whose refinement is depth-limited, while cells already satisfying $\delta_c \le \varepsilon_{\mathrm{refine}}$ remain unchanged. A well-chosen $G_{\max}$ therefore balances representational capacity and computational tractability, ensuring that the resulting cell set $\mathcal{C}$ spans the relevant range of spatial scales without degenerating into either under-resolved coarse partitions or unnecessarily deep trees that complicate subsequent stratification and allocation.

\paragraph{Refinement Criterion and Scale Sensitivity ($\varepsilon_{\mathrm{refine}}$).}
The threshold $\varepsilon_{\mathrm{refine}}$ determines when subdivision stops by controlling the sensitivity of the partition to local variation, and thus serves as the primary mechanism that shapes the geometry of the discrete support $\mathcal{C}$. In Step~1, subdivision proceeds as long as the normalized variation score $\delta_c$ exceeds this threshold, so $\varepsilon_{\mathrm{refine}}$ effectively defines the resolution at which variation is considered sufficiently homogeneous within a cell. 

We use fixed, branch-specific thresholds across the whole dataset, with surface branches adopting a relatively coarser threshold and volume branches a tighter one, reflecting the fact that, after normalization, volumetric flow fields typically exhibit finer-scale structures and require more aggressive refinement to be resolved. Decreasing $\varepsilon_{\mathrm{refine}}$ leads to finer partitions by forcing the algorithm to resolve smaller-scale variations, increasing the number of cells and expanding the high-intensity tail of $\tilde{T}_c = \delta_c / h_c$, whereas increasing it produces coarser cells and suppresses small-scale variation, effectively truncating the upper range of $S_c$. 

Through this mechanism, $\varepsilon_{\mathrm{refine}}$ not only determines the stopping condition locally, but also globally reshapes the empirical distribution of $S_c$ that underlies Step~2. In particular, since $S_c$ is derived from $\delta_c$ and the cell size $h_c$, the choice of $\varepsilon_{\mathrm{refine}}$ controls the balance between geometric scale and variation intensity in the resulting strata: overly large thresholds collapse variation across scales and reduce separability, while overly small thresholds amplify fine-scale noise and may lead to over-fragmentation without improving meaningful structure. As a result, $\varepsilon_{\mathrm{refine}}$ plays a central role in aligning the partition with the intrinsic multi-scale variation of the data, ensuring that the constructed support $\mathcal{C}$ provides a faithful and well-conditioned basis for subsequent grouping and budgeted allocation.

\paragraph{Scalar and Vector Channel Weights ($w^{(k)}_{\phi}$, $w^{(j)}_{\psi}$).}
The refinement score $\delta_c$ in Equation~\eqref{eq:delta-score} aggregates extrema-based variation across scalar and vector channels into a single trigger for subdivision, and the weights $w^{(k)}_{\phi}$ and $w^{(j)}_{\psi}$ determine their relative influence on the resulting partition. In the default external-aerodynamics setting, surface branches use pressure and wall shear stress while volume branches use pressure and velocity, with one scalar and one vector channel in each case. 

Although all channels are z-score normalized so that their ranges are dimensionless and broadly comparable, their geometric and physical roles differ: scalar fields such as pressure typically encode dominant structural variations tied to loads and global flow features, whereas vector fields often exhibit larger directional fluctuations that can produce high projection diameters even when their contribution to the target quantities is secondary. 

To avoid refinement being dominated by such directional effects, we deliberately downweight vector-channel terms relative to scalar ranges, ensuring that subdivision is primarily driven by physically salient scalar contrast while still allowing vector variation to trigger refinement when it is sufficiently strong. The projection-based definition of $\Delta \psi^{(j)}_c$, computed over a finite set of directions, further reduces axis bias compared to treating vector components independently, yielding a more isotropic estimate of spread. 

Through this design, the weights act as a mechanism for encoding coarse physical priorities into the partition: they regulate which types of variation are considered important for defining the support $\mathcal{C}$, and thereby indirectly shape the distribution of $\delta_c$, the induced intensity scores, and the downstream multi-scale grouping and allocation.

\paragraph{Ablation Study on the Vector-channel Weight $w^{(j)}_{\psi}$.}
\label{sec:app-wing-weight-v-sweep}
To isolate how the vector-channel weight influences support construction, we perform an ablation study on a representative SHIFT-Wing case at $M\approx0.85$ for both the surface and volume field. This transonic regime features shock formation and boundary-layer interactions, leading to large but spatially distinct variations in pressure and velocity, and thus exposes how scalar and vector channels compete in driving refinement. We vary only $w^{(j)}_{\psi}\in\{0.1,0.2,0.3,0.4,0.5\}$ while fixing $w^{(k)}_{\phi}=1$ and all other partition hyperparameters, and execute the partition stage alone. All observed changes therefore arise solely from the relative weighting of scalar and vector contributions in $\delta_c$.

We track both \emph{latent} refinement signals and \emph{triggered} refinement events. 
The counters $N^{\mathrm{thr}}_{\mathrm{s}}$ and $N^{\mathrm{thr}}_{\mathrm{v}}$ count the number of times the scalar or vector channel exceeds $\varepsilon_{\mathrm{refine}}$ threshold. 
The counters $N^{\mathrm{refine}}_{\mathrm{s}}$ and $N^{\mathrm{refine}}_{\mathrm{v}}$ count the number of splits triggered by the scalar or vector channel, respectively, i.e., the channel selected as the maximizer in Equation~\eqref{eq:delta-score}. Since the underlying point sets are fixed, all variation arises from changes in the vector-channel weight: as $w^{(j)}_{\psi}$ increases, the number of cells $|\mathcal{C}|$ grows monotonically, $N^{\mathrm{thr}}_{\mathrm{s}}$ remains constant, and $N^{\mathrm{thr}}_{\mathrm{v}}$ increases, indicating that more cells become eligible for refinement due to amplified vector contributions. Correspondingly, refinement shifts from being scalar-dominated to vector-dominated, with vector-triggered subdivisions increasing and scalar-trsiggered ones decreasing, consistent with the trends observed in Table~\ref{tab:wing-weight-v-sweep}.

\begin{table}[h]
\centering
\caption{\textbf{SHIFT-Wing: Sensitivity of partitioning to the vector-channel weight $w^{(j)}_{\psi}$}. By increasing the weight, we effectively control whether the scalar or vector channel dominates the refinement process, making the corresponding channel more likely to trigger refinement.}
\label{tab:wing-weight-v-sweep}
\footnotesize
\setlength{\tabcolsep}{2.5pt}
\begin{tabular}{@{}c*{5}{r}*{5}{r}@{}}
\toprule
& \multicolumn{5}{c}{Surface field} & \multicolumn{5}{c}{Volume field} \\
\cmidrule(lr){2-6}\cmidrule(lr){7-11}
$w^{(j)}_{\psi}$
& $|\mathcal{C}|$ & $N^{\mathrm{thr}}_{\mathrm{s}}$ & $N^{\mathrm{thr}}_{\mathrm{v}}$ & $N^{\mathrm{refine}}_{\mathrm{s}}$ & $N^{\mathrm{refine}}_{\mathrm{v}}$
& $|\mathcal{C}|$ & $N^{\mathrm{thr}}_{\mathrm{s}}$ & $N^{\mathrm{thr}}_{\mathrm{v}}$ & $N^{\mathrm{refine}}_{\mathrm{s}}$ & $N^{\mathrm{refine}}_{\mathrm{v}}$ \\
\midrule
0.1 & 30{,}466 & 5{,}258 & 8{,}958 & 3{,}252 & 6{,}161 & 220{,}524 & 38{,}516 & 77{,}751 & 25{,}193 & 58{,}090 \\
0.2 & 32{,}574 & 5{,}258 & 10{,}434 & 1{,}940 & 8{,}552 & 236{,}582 & 38{,}516 & 88{,}882 & 16{,}887 & 73{,}520 \\
0.3 & 32{,}836 & 5{,}258 & 10{,}669 & 1{,}053 & 9{,}628 & 242{,}754 & 38{,}516 & 92{,}479 & 8{,}398 & 84{,}537 \\
0.4 & 32{,}954 & 5{,}258 & 10{,}762 & 629 & 10{,}140 & 246{,}654 & 38{,}516 & 94{,}255 & 3{,}502 & 90{,}883 \\
0.5 & 33{,}035 & 5{,}258 & 10{,}816 & 421 & 10{,}399 & 249{,}629 & 38{,}516 & 95{,}340 & 1{,}406 & 93{,}990 \\
\bottomrule
\end{tabular}
\end{table}

This demonstrates how the weight controls which channel dominates the refinement process. Adjusting these weights changes the relative influence of scalar contrast and vector variation in the partition, and thereby affects the resulting distribution of cells in $\mathcal{C}$ and their associated variation scales.

\paragraph{Shared Hyperparameter Configuration.}

We do not perform per-case or per-run tuning of partition and allocation hyperparameters. Instead, a single configuration is used for each benchmark family (Table~\ref{tab:partition-hparams-datasets}) and applied uniformly across all datasets and all samples within each dataset. The intent is to decouple evaluation from instance-specific optimization and to reflect a realistic setting where hyperparameters are chosen once based on coarse physical reasoning rather than exhaustive search. While per-case tuning would improve performance, we adopt a shared configuration so that observed gains are attributable to the constructed measure rather than tailored hyperparameter adjustments. As a result, the reported performance reflects the robustness of the proposed strategy rather than gains from dataset-specific optimization.

\begin{table}[h]
\centering
\caption{\textbf{Partition hyperparameters.}
$|\mathcal{D}|$ denotes the number of unit projection directions used to evaluate the vector projection diameter $\Delta\psi^{(j)}_c$ in Equation~\eqref{eq:Delta-v-proj}. $K$ denotes the number of multi-scale strata bins. All settings are shared across datasets for our experiments, except for SHIFT-Wing, where a smaller $w^{(j)}_{\psi}$ mitigates overly aggressive vector-driven refinement under stronger flow variations. }
\vspace{3pt}
\label{tab:partition-hparams-datasets}
\footnotesize
\setlength{\tabcolsep}{3pt}
\begin{tabular}{@{}l*{8}{c}@{}}
\toprule
Dataset & $G_{\max}^{\mathrm{surf}}$ & $G_{\max}^{\mathrm{vol}}$ & $\varepsilon_{\mathrm{refine}}^{\mathrm{surf}}$ & $\varepsilon_{\mathrm{refine}}^{\mathrm{vol}}$ & $|\mathcal{D}|$ & $w^{(k)}_{\phi}$ & $w^{(j)}_{\psi}$ & $K$ \\
\midrule
AhmedML & 8 & 13 & 0.05 & 0.005 & 13 & 1.0 & 0.4 & 64 \\
DrivAerML & 8 & 13 & 0.05 & 0.005 & 13 & 1.0 & 0.4 & 64 \\
SHIFT-Wing & 8 & 13 & 0.05 & 0.005 & 13 & 1.0 & 0.1 & 64 \\
\bottomrule
\end{tabular}
\end{table}

\paragraph{Stratification Bins $K$.}
\label{sec:app-strata}
Once the cell set $\mathcal{C}$ is fixed, Step~2 assigns a scale label $\ell(c)$ to each cell according to its variation score $S_c$, yielding strata ${\mathcal{C}_\ell}$ indexed by $\Lambda$. This stratification is controlled by the number of bins $K$ together with percentile bounds $p_{\mathrm{lo}}$ and $p_{\mathrm{hi}}$.

To obtain stable strata boundaries, the range of $S_c$ over multi-point cells is restricted using percentile clipping. Specifically, $S_{\min}$ and $S_{\max}$ are defined as the $p_{\mathrm{lo}}$-th and $p_{\mathrm{hi}}$-th percentiles of ${S_c}$. Setting $p_{\mathrm{lo}}=0$ and $p_{\mathrm{hi}}=100$ recovers the full empirical range, while reducing $p_{\mathrm{hi}}$ removes extreme upper-tail values. In CFD data, $S_c$ typically exhibits a heavy-tailed distribution, where localized high-variation regions (e.g., shocks or separation zones) produce large scores that expand the global range and compress the majority of leaves into a narrow interval.

The clipped range is then divided into $K$ bins uniformly in $S_c$. Since $S_c$ is log-transformed, uniform spacing in $S_c$ induces a geometric partition of the underlying variation magnitude. Each cell is assigned to a stratum based on the interval it falls into, so that strata correspond to comparable relative variation scales rather than absolute differences.

In particular, $K$ determines the granularity of the multi-scale decomposition: larger $K$ produces more strata with narrower $S_c$ intervals and thus finer distinctions in physical variation, while smaller $K$ yields coarser strata that group a wider range of variations. Importantly, regardless of $K$, the overall span of variation considered is fixed by $p_{\mathrm{lo}}$ and $p_{\mathrm{hi}}$, and the number of cells is determined solely by the Step~1 partition.

Percentile clipping and uniform binning together define the stratification procedure: clipping limits the influence of extreme values, while binning controls the resolution of scale separation. Consequently, the resulting strata are more evenly populated and aligned with the multi-scale structure of the data, providing a stable basis for the subsequent sampling step.

\paragraph{Sampling Fill Ratio ($\rho$).}
The sampling fill ratio $\rho \in (0,1]$ controls the effective capacity of each cell. 
For a cell with raw cardinality $N_c$, its effective capacity is defined as $\tilde{N}_c$.
This capacity limits the maximum number of points that can be drawn from the cell and determines when the cell is considered sufficiently represented.
A smaller $\rho$ reduces this effective capacity, causing cells to reach saturation earlier during sampling.

This prevents the case where the allocated budget of a cell exceeds its useful capacity (total point count), so that the subsequent water-filling procedure does not absorb excessive budget and over-represent individual cells. By imposing the per-cell cap through $\rho$, the excess budget is redistributed to more cells with nonzero allocation, resulting in a more balanced spatial distribution across regions and scales.

\section{Scalability and Efficiency of M\texorpdfstring{$^3$}{3}}
\label{sec:app-preprocess-feasibility}

\textbf{Preprocessing Feasibility with $N$-point Inputs.} 
Preprocessing can become a computational bottleneck at industrial scales (e.g., $10^6$--$10^8$ points). This section evaluates the scalability and processing efficiency of M$^3$ on large-scale point clouds, comparing it with five conventional methods. As sampling decisions depend on geometry- or variation-based scores computed over the entire domain, all methods require processing the full labeled point set $\mathcal{P}$ of size $N$ to construct the same number $m$ support indices across increasing input scales.

\begin{itemize}
\setlength{\itemsep}{2pt}

  \item \textbf{Uniform Random:} uniform subsampling without any scoring, serving as a lower-bound cost baseline. This reflects the minimum preprocessing overhead.

  \item \textbf{M$^3$ (Morton-based):} uses Morton-order partitioning followed by scale-aware index construction. The partitioning and indexing (grouping) stages are included in wall time.

  \item \textbf{Grid-based method:} partition the domain using a uniform voxel grid and compute per-voxel scores based on $(\phi^{(k)}, \boldsymbol{\psi}^{(j)})$, with channel weights analogous to  Equation~\eqref{eq:delta-score}. Indexing is then performed via uniform voxel selection to obtain $m$ indices.

  \item \textbf{Proxy-based method:} applies the same scoring mechanism on a coarser voxel grid (with approximately 256 points per voxel). Final sampling indices are obtained via global weighted sampling using voxel-level importance scores, yielding $m$ selected indices.

  \item \textbf{Neighborhood-based method:} constructs $k$-NN neighborhoods ($k=32$). Local scores are computed from $(\phi^{(k)}, \boldsymbol{\psi}^{(j)})$, aggregated using weights $w^{(k)}_{\phi}$ and $w^{(j)}_{\psi}$ as in Equation~\eqref{eq:delta-score}, followed by top-$m$ selection.

\end{itemize}

To ensure a fair comparison of preprocessing efficiency, all baselines use streaming access to avoid loading the full dataset into memory, which would otherwise lead to out-of-memory issues at large input scale. We also impose modest computational constraints (e.g., limiting neighborhood sizes) to keep these methods feasible under the same resource budget.

Our evaluation uses a wall-clock limit of $600$s per case and records the Peak Resident Set Size (RSS) during execution. At this setting, even if each case only reaches the time limit, a $500$-case benchmark would still require $3.0\times 10^{5}$\,s ($\approx$83 hours) of preprocessing. Methods that frequently hit this limit are therefore impractical for large-scale data processing. All reported wall-clock times measure compute only (excluding memory-mapped data loading), and runs exceeding the limit are marked as \texttt{Timeout}. All methods are run five times with different random seeds, and the results are averaged to reduce stochasticity and ensure fair comparison.

\begin{figure}[h]
\centering
\begin{minipage}[t]{0.48\linewidth}
\centering
\includegraphics[width=\linewidth]{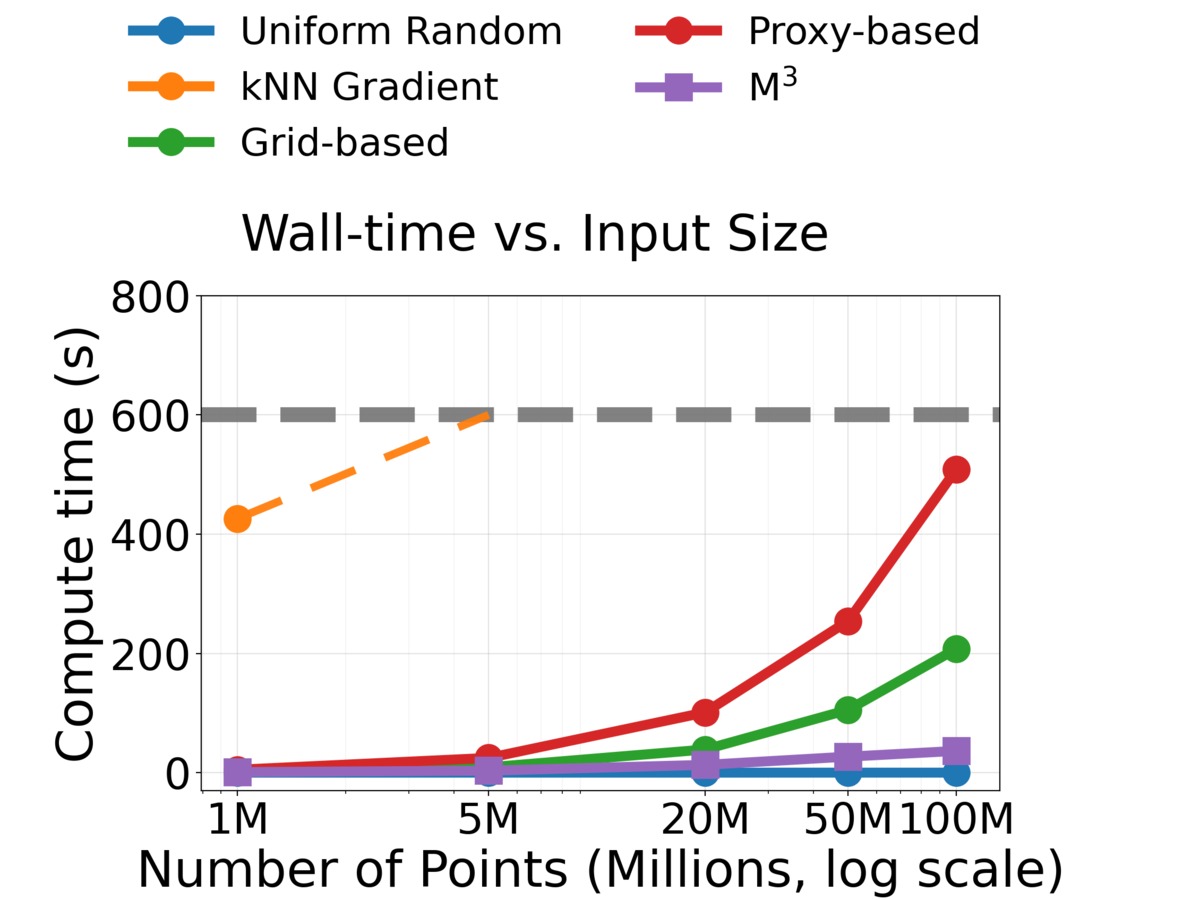}
\end{minipage}\hfill
\begin{minipage}[t]{0.48\linewidth}
\centering
\includegraphics[width=0.97\linewidth]{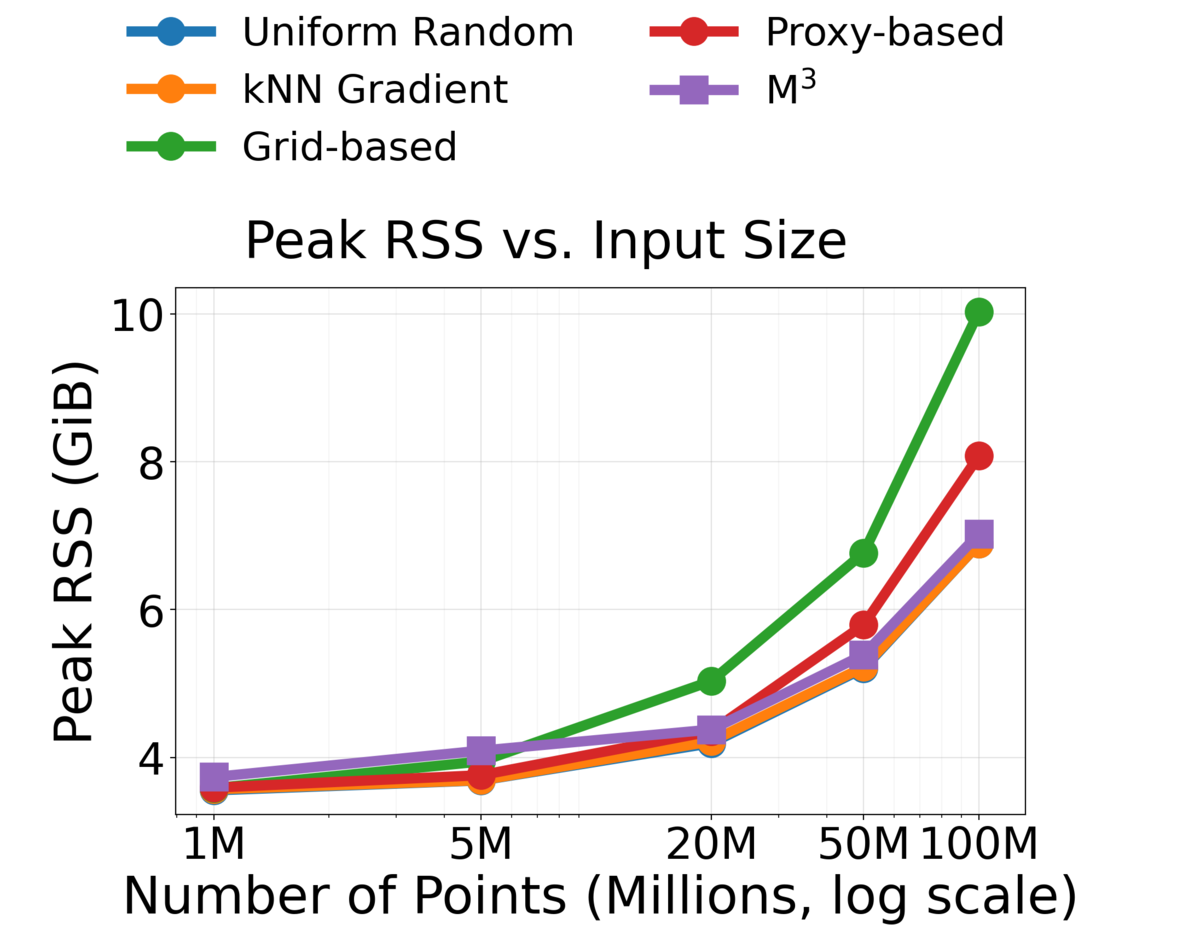}
\end{minipage}
\caption{\textbf{Efficiency comparison across methods under increasing scale $N$.}
\emph{Left:} wall-clock time under a 600\,s limit. Neighborhood-based method ($k$NN) time out beyond $N\!\approx\!10^6$, while M$^3$ remains efficient across all scales. Grid- and proxy-based methods remain feasible but become significantly slower at large $N$. 
\emph{Right:} peak RSS. Most methods remain within a similar memory range, while grid- and proxy-based methods grow more rapidly at the largest scales.}
\label{fig:feasibility-wall-peak}
\end{figure}

\begin{table}[h]
\centering
\footnotesize
\caption{\textbf{Preprocessing time as input size increases (seconds).} Results at $N\!=\!10^6$, $2{\times}10^7$, and $10^8$. Rows are sorted by runtime; entries exceeding the $600$\,s cap are marked as \texttt{Timeout}.}
\vspace{3pt}
\label{tab:exp-scaling-curve}
\makebox[\linewidth][c]{
\begin{tabular}{@{}lccc@{}}
\toprule
Method & $N\!=\!10^6$ & $N\!=\!2{\times}10^7$ & $N\!=\!10^8$ \\
\midrule
Random & $3.3\!\times\!10^{-4}$ & $4.0\!\times\!10^{-3}$ & $2.4\!\times\!10^{-2}$ \\
M$^3$ & $0.94$ & $13.1$ & $36.4$ \\
Grid-based & $1.67$ & $38.8$ & $208$ \\
Proxy-based & $4.92$ & $101$ & $509$ \\
Neighborhood-based & $426$ & \texttt{Timeout} & \texttt{Timeout} \\
\bottomrule
\end{tabular}
}
\end{table}

Figure~\ref{fig:feasibility-wall-peak} and Table~\ref{tab:exp-scaling-curve} show a clear separation across methods. In terms of wall-clock time, neighborhood-based methods already require $426$\,s at $N=10^6$, compared to $0.94$\,s for M$^3$, and reach \texttt{Timeout} at $N=2{\times}10^7$ and $10^8$. Grid-based and proxy-based methods remain feasible but scale poorly, reaching $208$\,s and $509$\,s at $10^8$, while M$^3$ completes in tens of seconds ($36.4$\,s). Peak RSS remains within a comparable range (a few to $\sim$10\,GiB) for random, neighborhood-based, and M$^3$, while grid-based and proxy-based methods grow more rapidly at large $N$. In these regimes, wall-clock time, rather than memory, becomes the primary feasibility constraint. These results demonstrate the strong scalability and efficiency of M$^3$.

\paragraph{Why Morton Matches our Method.}
Neighborhood-based methods must explicitly construct and access large per-point neighborhoods (e.g., indices, adjacency lists, and repeated queries), leading to computational cost that scales with the total neighbor volume across the point cloud. This quickly becomes impractical at industrial scale, as reflected in the observed scaling behavior.

In contrast, M$^3$ leverages linearized Morton (Z-order) ordering~\cite{morton1966zorder,meagher1982octree}. A single global sort is followed by efficient linear scans, enabling interval-based extrema queries within cells without constructing explicit global $k$NN graphs. This avoids repeated neighborhood queries and keeps M$^3$ efficient even at $10^8$ points.

While Hilbert~\cite{hilbert1891stetige,moon2001hilbert} and other space-filling curves offer stronger spatial locality, their benefit is most pronounced in fine-grained point-wise neighborhood operations. In our setting, computation is performed over coarse spatial cells, where strict adjacency is not required. Morton ordering provides sufficiently strong locality while being simpler, faster to compute, and more cache-friendly, making it a practical choice for scalable preprocessing.

Given these scaling characteristics, we focus our experiments on comparing uniform random sampling and M$^3$ under the same budget. Grid-based, proxy-based importance sampling, and neighborhood-based methods are treated as feasibility references rather than full training baselines. Overall, these results establish a clear systems-level separation: M$^3$ preprocessing remains tractable at $\sim$100M-scale industrial point clouds, whereas neighborhood-based methods fail to scale and grid- and proxy-based methods incur substantially higher cost.


\section{Experimental Protocol}
\label{sec:app-experimental-setup}

\subsection{Datasets}
\label{sec:app-ext-aero-datasets}

\paragraph{AhmedML.}
\label{sec:app-ahmedml}
AhmedML~~\cite{ashton2024ahmedml} is a public dataset of high-resolution CFD simulations for 500 Ahmed body variants, a canonical bluff-body model in automotive aerodynamics. Simulations are performed with a hybrid RANS--LES scheme in OpenFOAM, capturing key flow phenomena such as separation and complex 3D vortical structures. Each case contains on the order of 20M mesh cells. The dataset does not provide an official split; we therefore partition it into 400/50/50 training, validation, and test samples following an 80/10/10 ratio, lending 400 samples for training, 50 for validation, and 50 for testing. 

\paragraph{DrivAerML.}
\label{sec:app-drivaerml}
DrivAerML~\cite{ashton2024drivaerml} is a large-scale dataset for machine learning in high-fidelity automotive aerodynamics. It consists of 500 parametrically varied DrivAer geometries and addresses the limited availability of publicly accessible CFD data at industrial resolution. Each case contains approximately 8.8M surface points and 140--160M volumetric cells, with simulations performed using hybrid RANS--LES, reflecting state-of-the-art practice in the automotive industry. Each sample provides surface fields including pressure and wall shear stress, as well as volumetric fields such as velocity, pressure, and vorticity. As no official split is provided and simulation results are missing for 16 cases, we use the remaining 484 valid samples and partition them into 387/48/49 for training, validation, and testing following an 80/10/10 split.

\paragraph{Luminary SHIFT-Wing.}
\label{sec:app-shift-wing}
SHIFT-Wing~\cite{luminary2025shiftwing} is an open-source dataset for transonic transport aerodynamics based on parametrically varied NASA Common Research Model (CRM) geometries, developed with Otto Aviation. It focuses on high-speed cruise wing--fuselage configurations for planform-centric design exploration. Geometry and operating conditions are sampled via Latin hypercube design, with angle of attack in $[0^\circ,4^\circ]$ and Mach numbers in $\{0.5, 0.85\}$, covering both shock-free and transonic regimes. Simulations use steady RANS with the Spalart--Allmaras model. A key feature is Luminary Mesh Adaptation, an automated solution-adaptive meshing procedure that refines anisotropically to capture shocks and sharp gradients. In this work, we use the $M=0.85$ subset (1714 simulations), as it exhibits stronger variations and a broader range of physical scales compared to the $M=0.5$ subset. The dataset is split into 1371/171/172 cases for training/validation/test following an 80/10/10 partition. Each case includes approximately 3M surface points and 6M volume points, with surface pressure $p_s$, wall shear stress $\tau_w$, and volumetric fields including pressure $p_v$ and velocity $\mathbf{u}$.

\subsection{Backbone Choice}
Our work evaluates model behavior under data distribution shifts with fixed model capacity, requiring two key considerations: \\
1. The model's latent representation computation must remain consistent between training and inference, ensuring that observed performance differences arise from the input data rather than changes in the model or its processing behavior;\\
2. The entire domain can be evaluated at full resolution under the same target distribution to ensure a fair comparison.

We therefore use AB-UPT~\cite{alkin2025abupt} as the main backbone. This choice is motivated by its decoupled structure of latent anchor tokens and query tokens: anchors provide stable latent references, while queries retrieve representations via attention over anchors. Such a design allows training under varying input distributions while supporting evaluation on arbitrary query tokens with a fixed latent graph construction behavior, satisfying the first consideration.

In contrast, other scalable frameworks, including the Transolver~\cite{wu2024transolver} family and GAOT~\cite{wen2025gaot}, construct latent representations through input-conditioned aggregation mechanisms (e.g., slicing or multiscale neighborhood encoding). In these models, changes in discretization or sampling strategies cause the distribution of input tokens at inference to differ from the distribution seen during training. Because latent token computations depend on the input structure, this distribution mismatch alters both the optimization objective and the resulting representations, making it difficult to isolate the true effect of input distribution shifts.

Moreover, AB-UPT's anchor--query interface supports full-resolution evaluation while keeping model capacity fixed across sampling regimes. This enables inference on the same full-resolution target domain under large-scale computational constraints by processing queries in chunks, without requiring all tokens to be handled simultaneously, thereby satisfying the second consideration.

\subsection{Training Setup}
\label{sec:app-training-protocol}

We follow the general training setup of AB-UPT, including the optimizer, learning rate schedule, architectural design, loss weighting, and numerical precision. The only difference lies in the training data construction: we compare models trained with uniform random subsampling (the default AB-UPT strategy) and with data processed by the M$^3$ pipeline.

Across all datasets, the model uses a hidden dimension of 192 with 12 Transformer blocks. We train for 500 epochs on AhmedML and DrivAerML, and 250 epochs on SHIFT-Wing, using an effective batch size of 1. Optimization is performed with the LION optimizer~\cite{chen2023lion}, a peak learning rate of $5{\times}10^{-5}$, weight decay of $0.05$, linear warmup over the first 5\% of training, and cosine decay to $10^{-6}$. AhmedML and DrivAerML use \texttt{float16} mixed precision, while SHIFT-Wing uses full \texttt{float32}. For SHIFT-Wing, the angle of attack $\alpha$ varies across simulations and is incorporated via Diffusion Transformer (DiT)-style conditioning applied to all blocks in both surface and volume branches. The models are trained using a weighted sum of mean squared error (MSE) losses over multiple physical fields, including surface pressure, surface friction, volume velocity, and volume pressure.

Following the design principles of AB-UPT, all models use symmetric surface and volume tokens (anchors and queries), with 8192 tokens for AhmedML/DrivAerML and 2048 for SHIFT-Wing. While the original configuration uses 16384 and 8192 tokens, respectively, we adopt moderately reduced values guided by AB-UPT’s empirical settings, which are sufficient at this scale to support accurate evaluation in our experiments.

Training is conducted on a single NVIDIA GeForce RTX 5090 GPU, taking approximately 7/7/28 hours for AhmedML, DrivAerML, and SHIFT-Wing, respectively, and occupies 4GB of GPU memory. The longer runtime for SHIFT-Wing is due to the larger number of effective samples per epoch. 



\section{Visualization}
\label{sec:app-visualization}
This section presents additional results and visualizations of the M$^3$ pipeline. We organize the content as follows:
\begin{enumerate}
\item Raw data distributions of the simulation datasets.
\item Partitioning results from Step 1.
\item Grouping results from Step 2.
\item Sampling results from Step 3.
\item Inference results include predicted surface coefficients and full-resolution visualizations for each dataset.
\end{enumerate}

\subsection{Discretization Structure of the Experimental Datasets: surface mesh and volume mesh}
\label{sec:dataset-discretization}
\begin{figure}[h]
  \centering
  \includegraphics[width=1\linewidth]{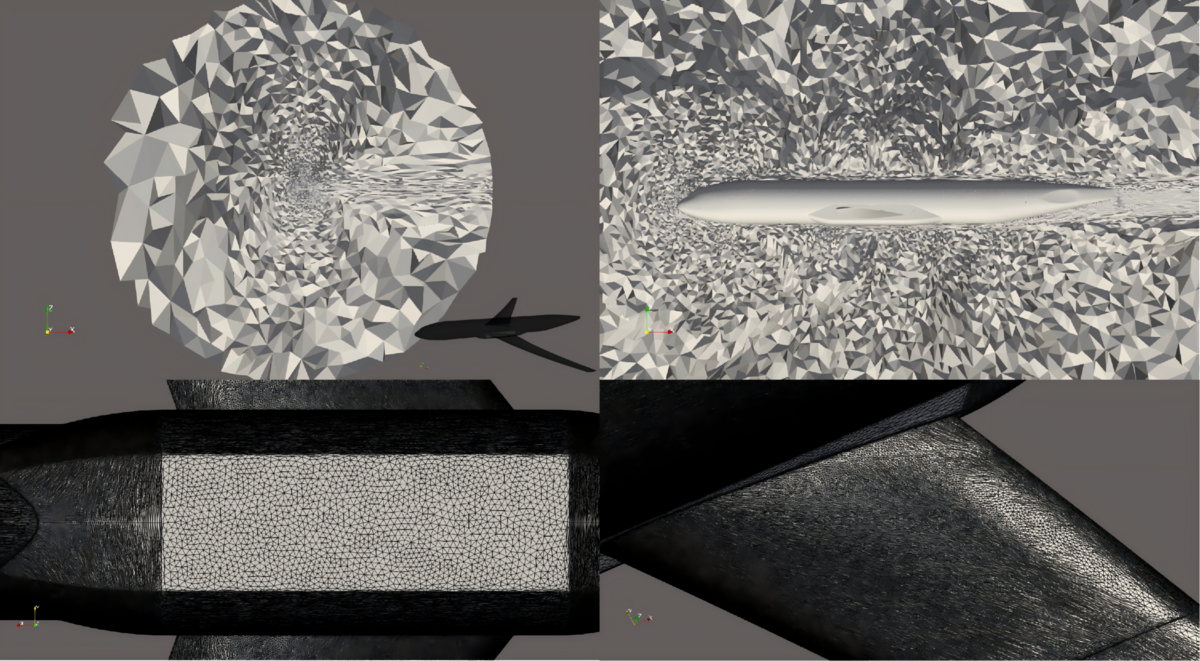}
  \caption{\textbf{Mesh of SHIFT-Wing.} The mesh exhibits strong anisotropy, originating from AMR, where high-gradient regions are densely resolved.}
  \label{fig:app-vis-1}
\end{figure}

\begin{figure}[h]
  \centering
  \includegraphics[width=1\linewidth]{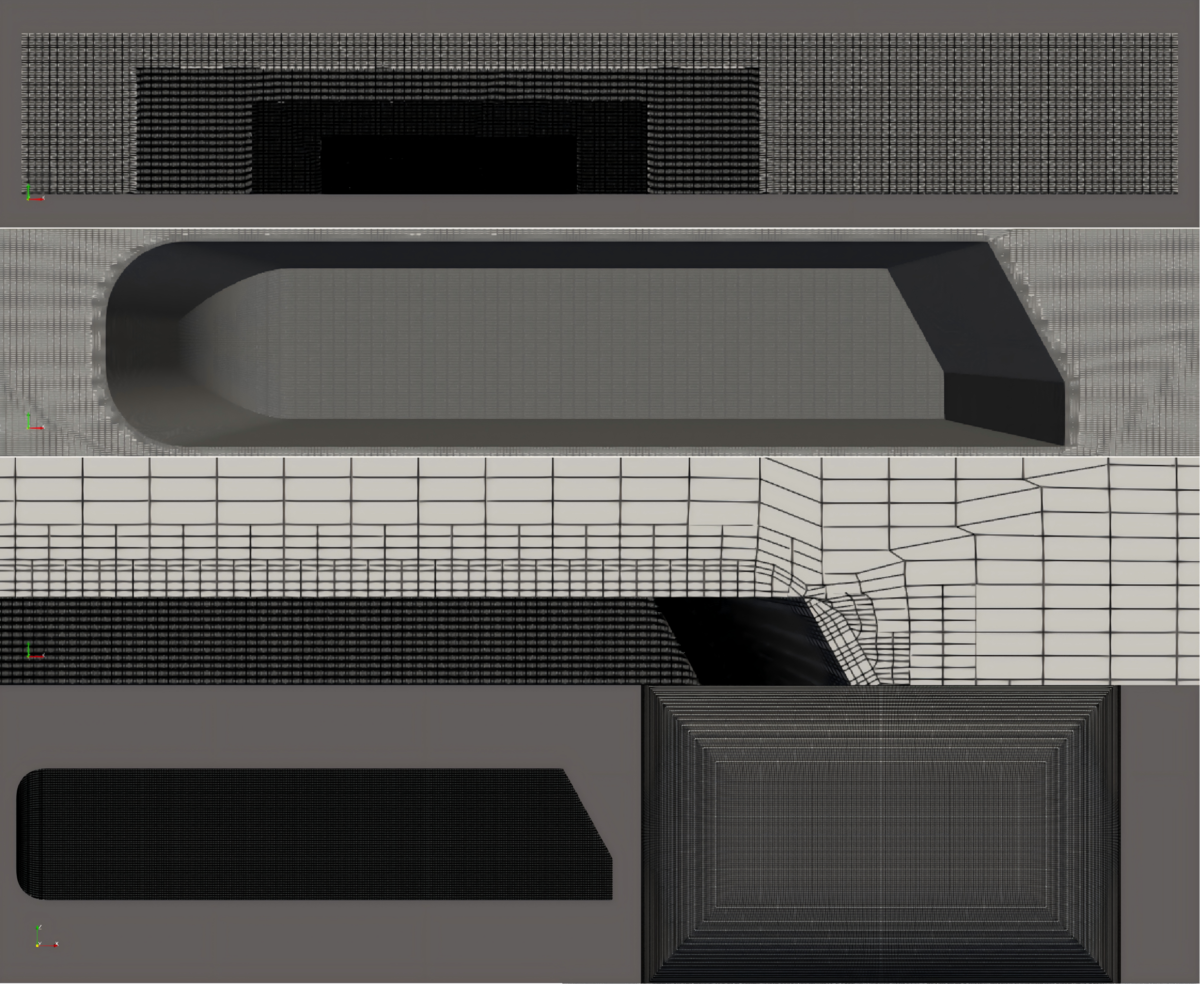}
  \caption{\textbf{Mesh of AhmedML.} The surface mesh is nearly uniform, and variation in volume cell size is relatively mild.}
  \label{fig:app-vis-2}
\end{figure}

\begin{figure}[h]
  \centering
  \includegraphics[width=1\linewidth]{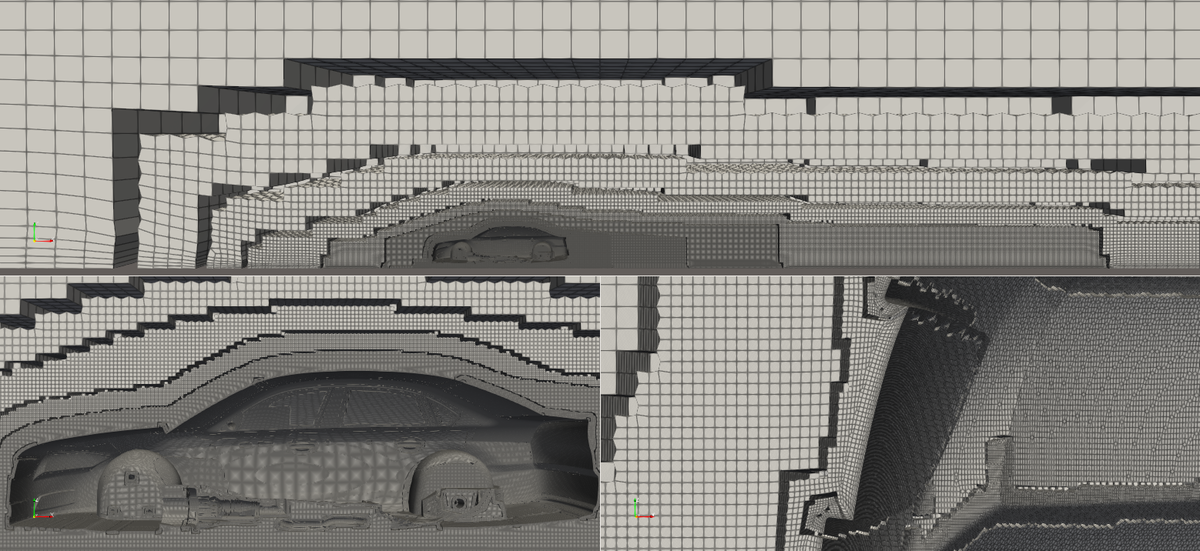}
  \caption{\textbf{Mesh of DrivAerML.} The contrast between coarse and fine cells is highly pronounced, resulting in a strongly non-uniform data distribution.}
  \label{fig:app-vis-3}
\end{figure}

\clearpage
\subsection{Geometry Partition Results for Surface and Volume Fields}
\label{sec:geom-partition}
All subsequent sampling is based on these cell supports. To better illustrate multi-scale layering, rendering transparency may be adjusted.

\begin{figure}[h]
  \centering
  \includegraphics[width=\linewidth]{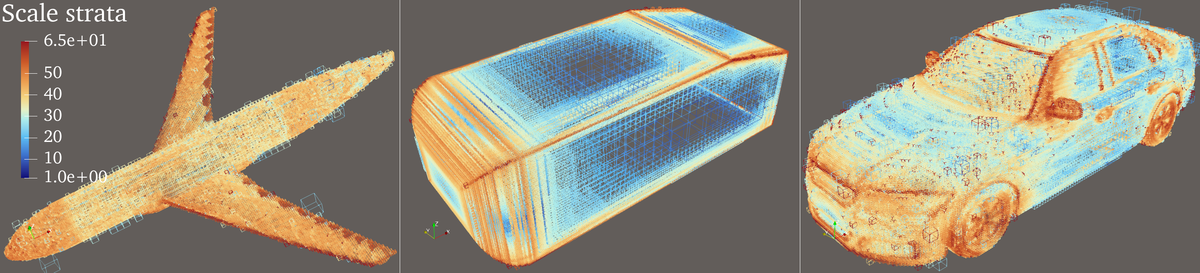}
  \caption{\textbf{Surface cells for the three datasets using a fixed scale bin $K$ = 64.} The final bin (65) contains cells with only one point.}
  \label{fig:app-vis-4}
\end{figure}

\begin{figure}[H]
  \centering
  \includegraphics[width=\linewidth]{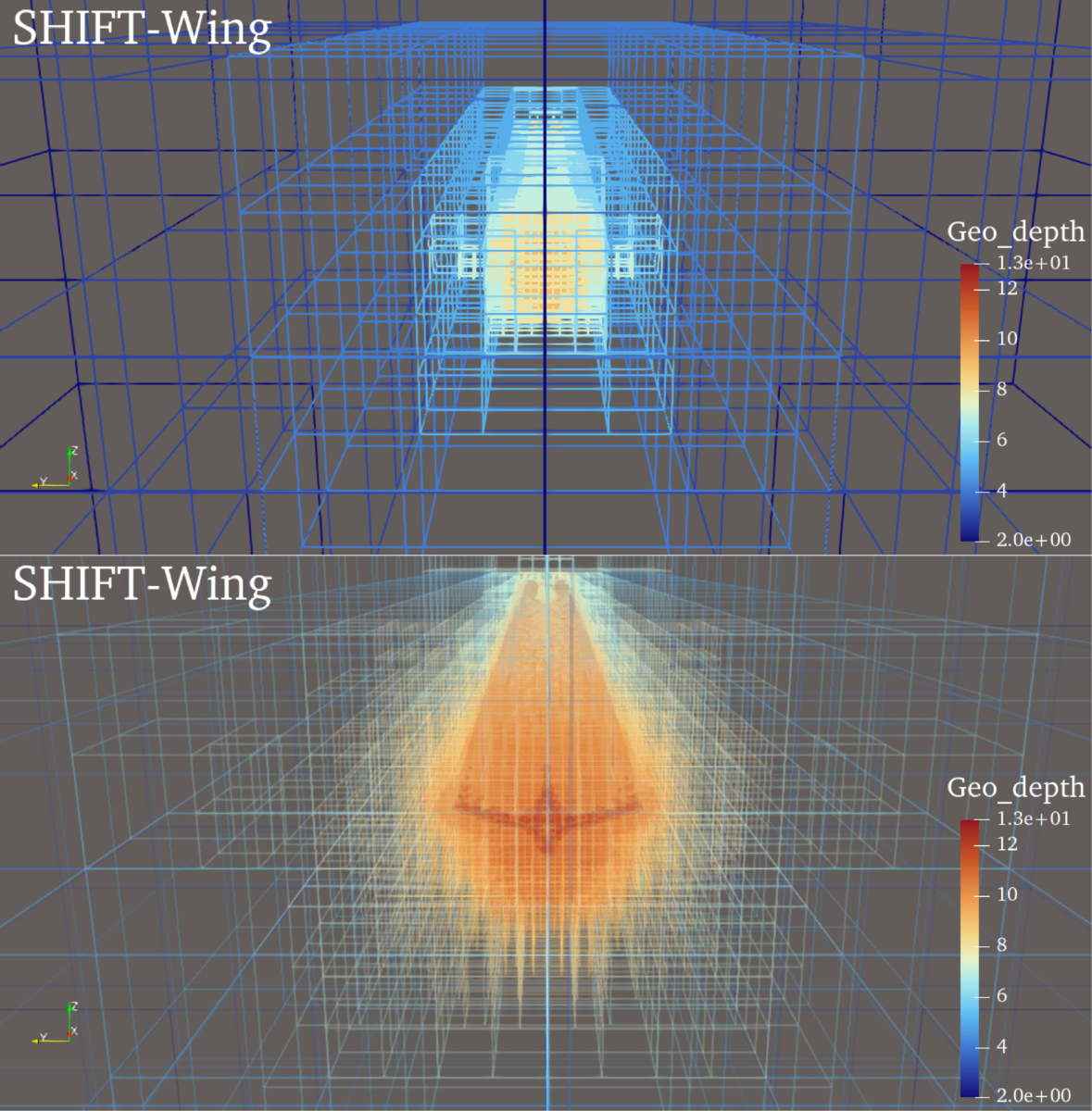}
  \caption{\textbf{Volume cells for SHIFT-Wing, colored by geometric size.}}
  \label{fig:app-vis-6}
\end{figure}

\begin{figure}[h]
  \centering
  \includegraphics[width=\linewidth]{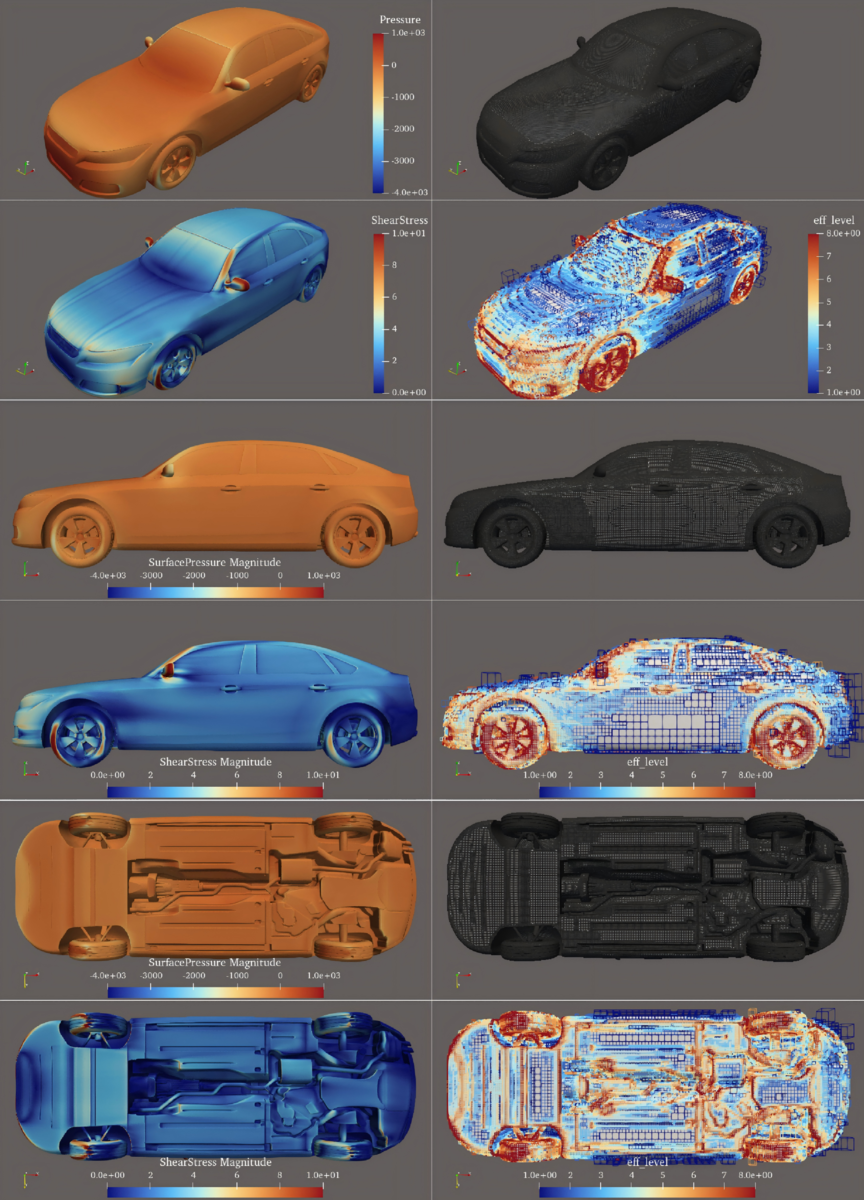}
  \caption{\textbf{Surface cell visualization for DrivAerML with scale bin = 8 for improved clarity.} Cells are adaptively refined in regions with large physical variation, resulting in a finer partition where needed. The resulting cells jointly form a coarse-grained representation of the full-resolution data.}
  \label{fig:app-vis-5}
\end{figure}

\begin{figure}[p]
  \vspace*{\fill}
  \centering
  \includegraphics[width=\linewidth]{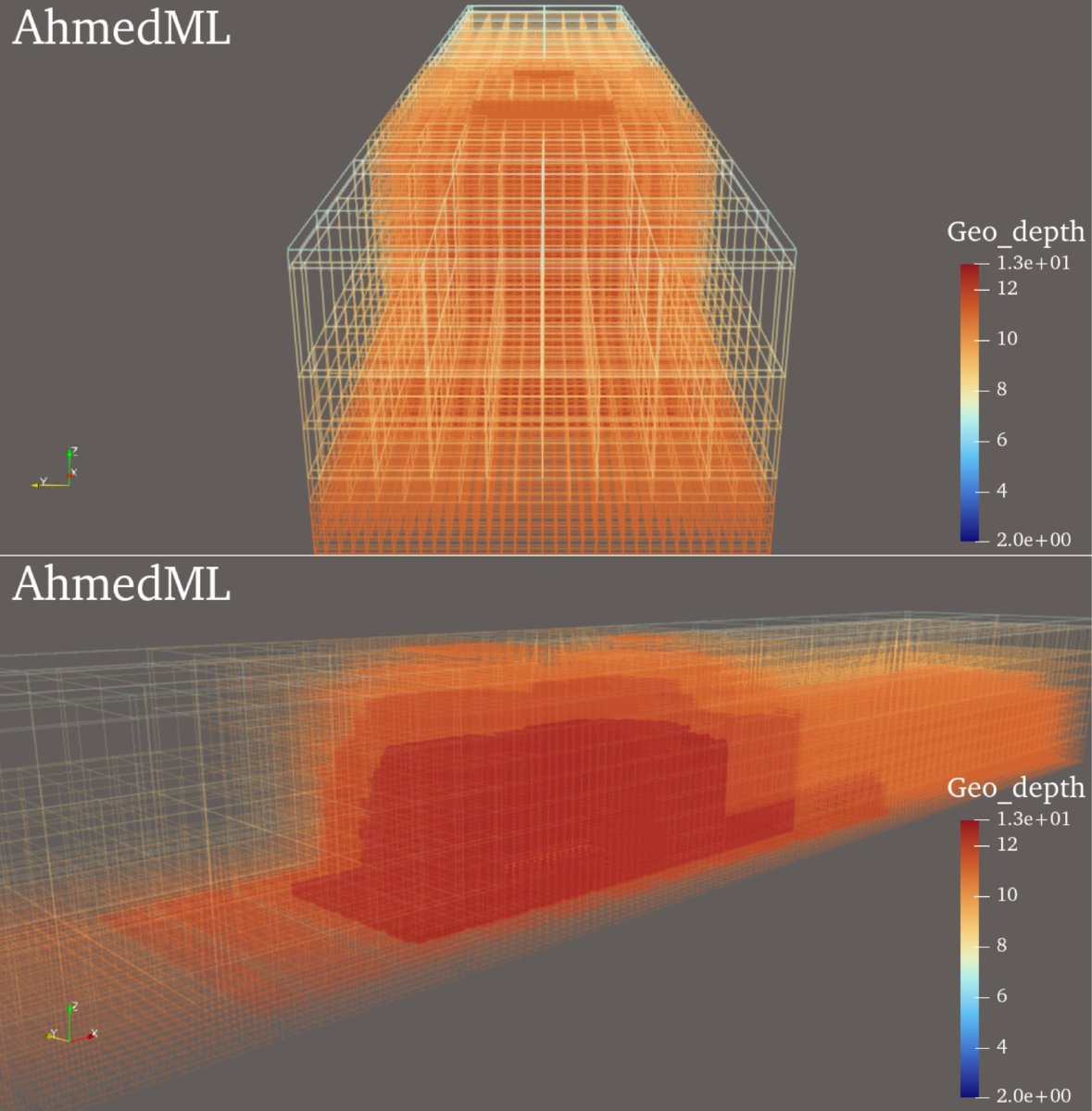}
  \caption{\textbf{Volume cells for AhmedML, colored by geometric size.}}
  \label{fig:app-vis-7}
    \vspace*{\fill}
\end{figure}

\begin{figure}[p]
  \vspace*{\fill}
  \centering
  \includegraphics[width=\linewidth]{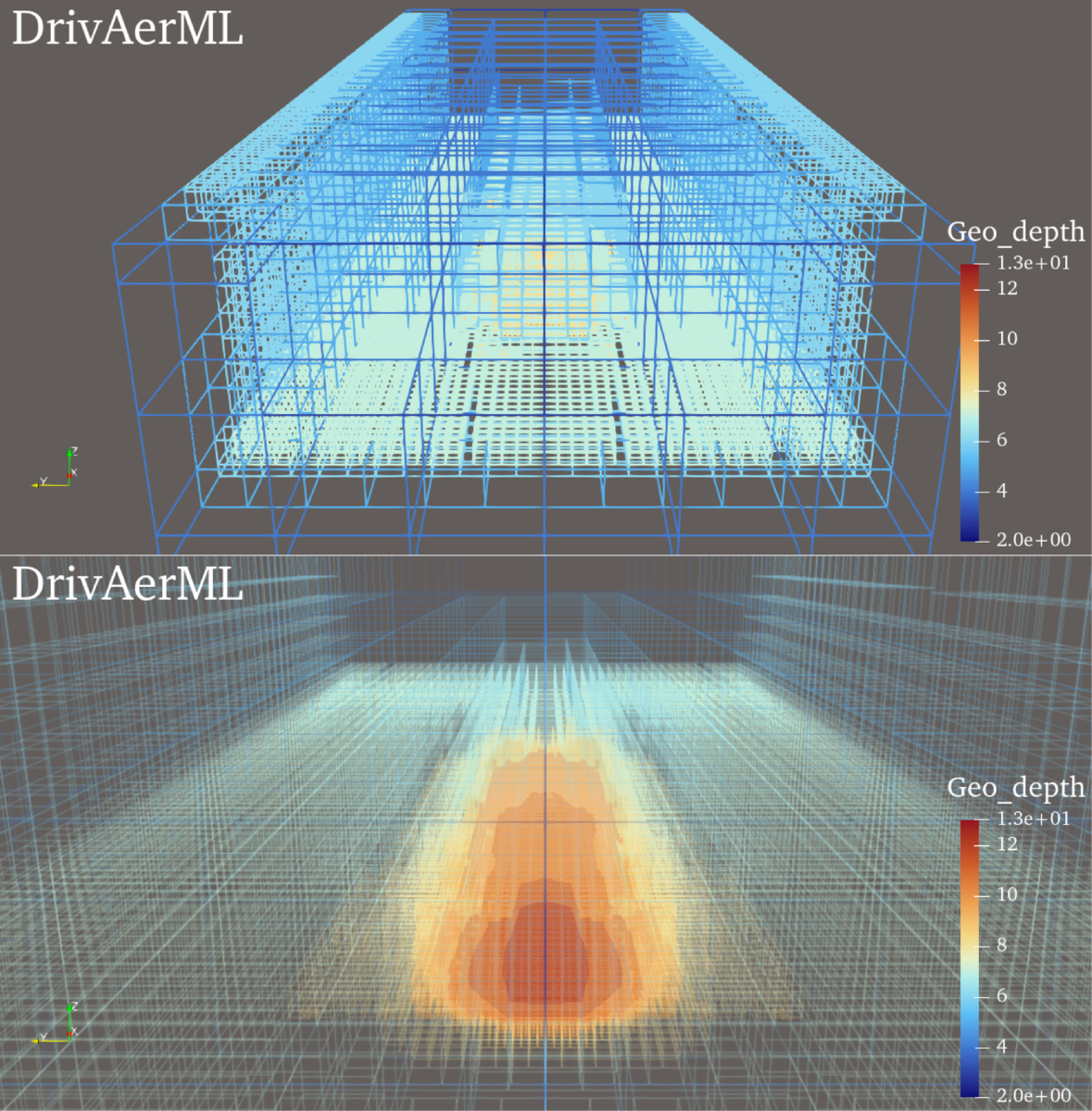}
  \caption{\textbf{Volume cells for DrivAerML, colored by geometric size.}}
  \label{fig:app-vis-8}
  \vspace*{\fill}
\end{figure}

\clearpage
\subsection{Step 2 Scale Stratification Results}
\label{sec:step2-scale-stratification}
These figures show that cells are reallocated from geometry-driven distributions to a more balanced, scale-aware distribution. In Step 3, the sampling budget is assigned based on stratified cell supports, which provide more uniform coverage than geometry-based depth distributions.

\begin{figure}[h]
  \centering
  \includegraphics[width=0.98\linewidth]{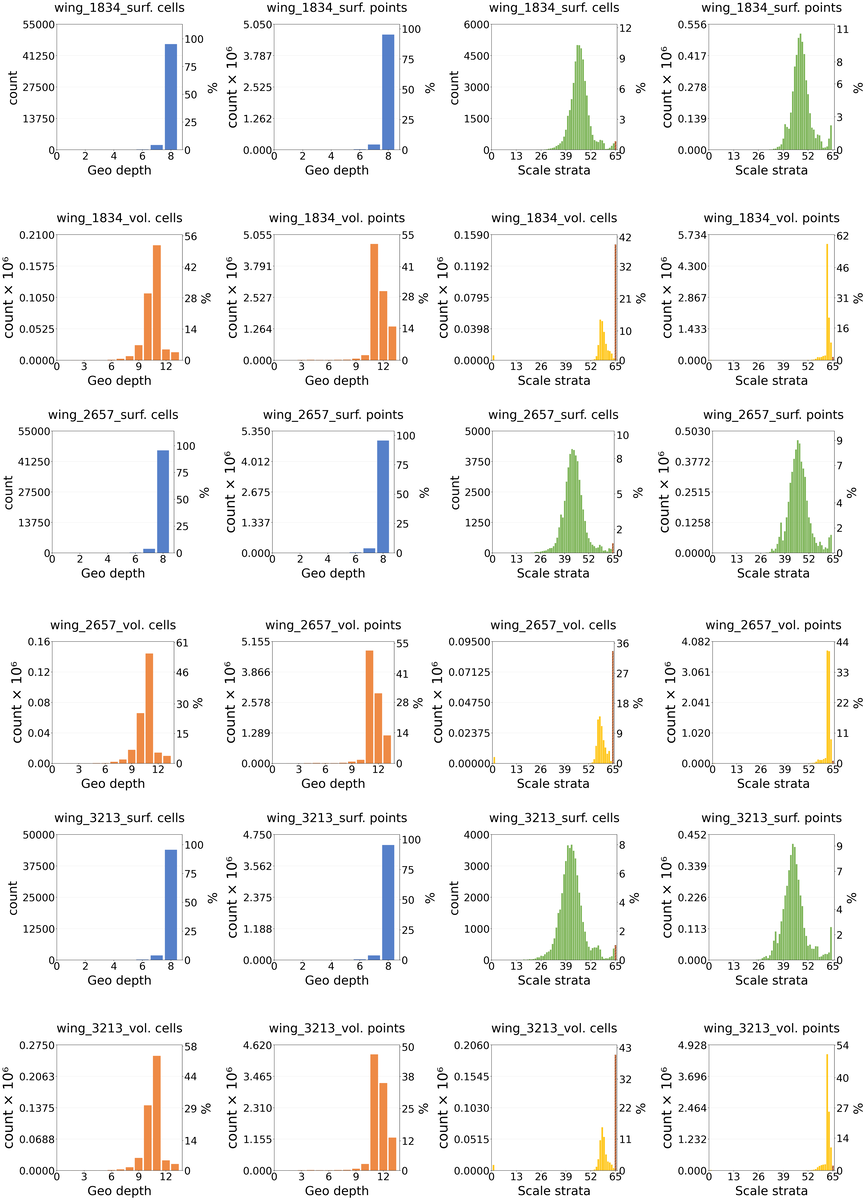}
  \caption{\textbf{Statistical summaries for SHIFT-Wing.} Left axis: counts of cells and points; right axis: percentage of the total.}
  \label{fig:app-vis-9}
\end{figure}

\begin{figure}[h]
  \centering
  \includegraphics[width=\linewidth]{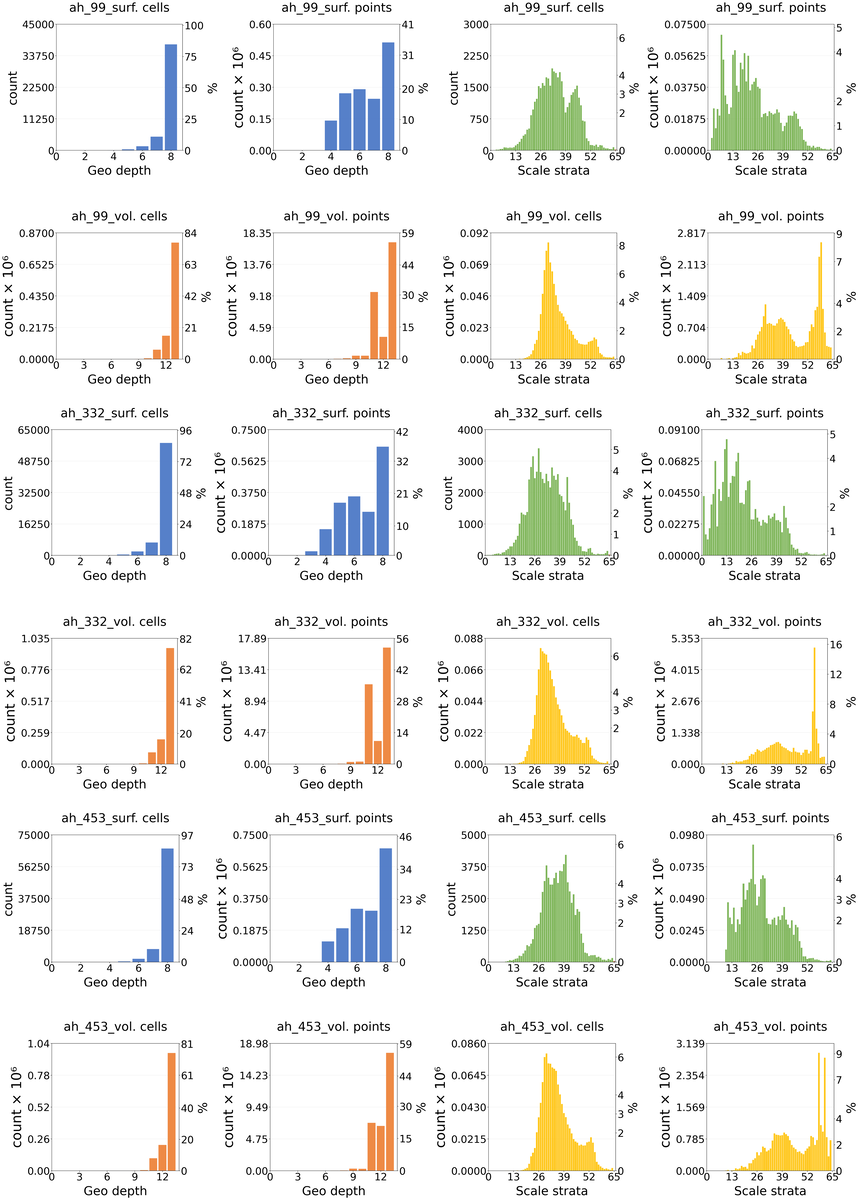}
  \caption{\textbf{Statistical summaries for AhmedML.} Left axis: counts of cells and points; right axis: percentage of the total.}
  \label{fig:app-vis-10}
\end{figure}

\begin{figure}[h]
  \centering
  \includegraphics[width=\linewidth]{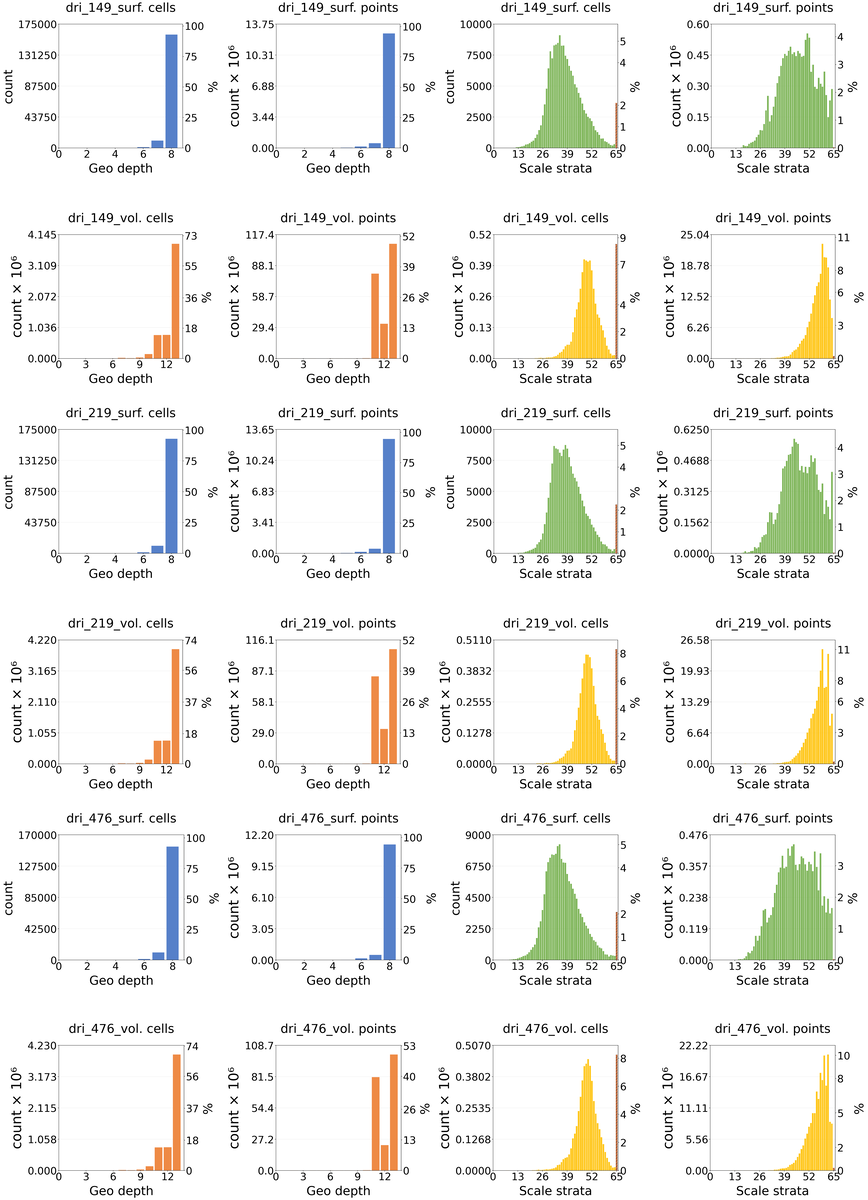}
  \caption{\textbf{Statistical summaries for DrivAerML.} Left axis: counts of cells and points; right axis: percentage of the total.}
  \label{fig:app-vis-11}
\end{figure}

\begin{figure}[h]
  \centering
  \includegraphics[width=\linewidth]{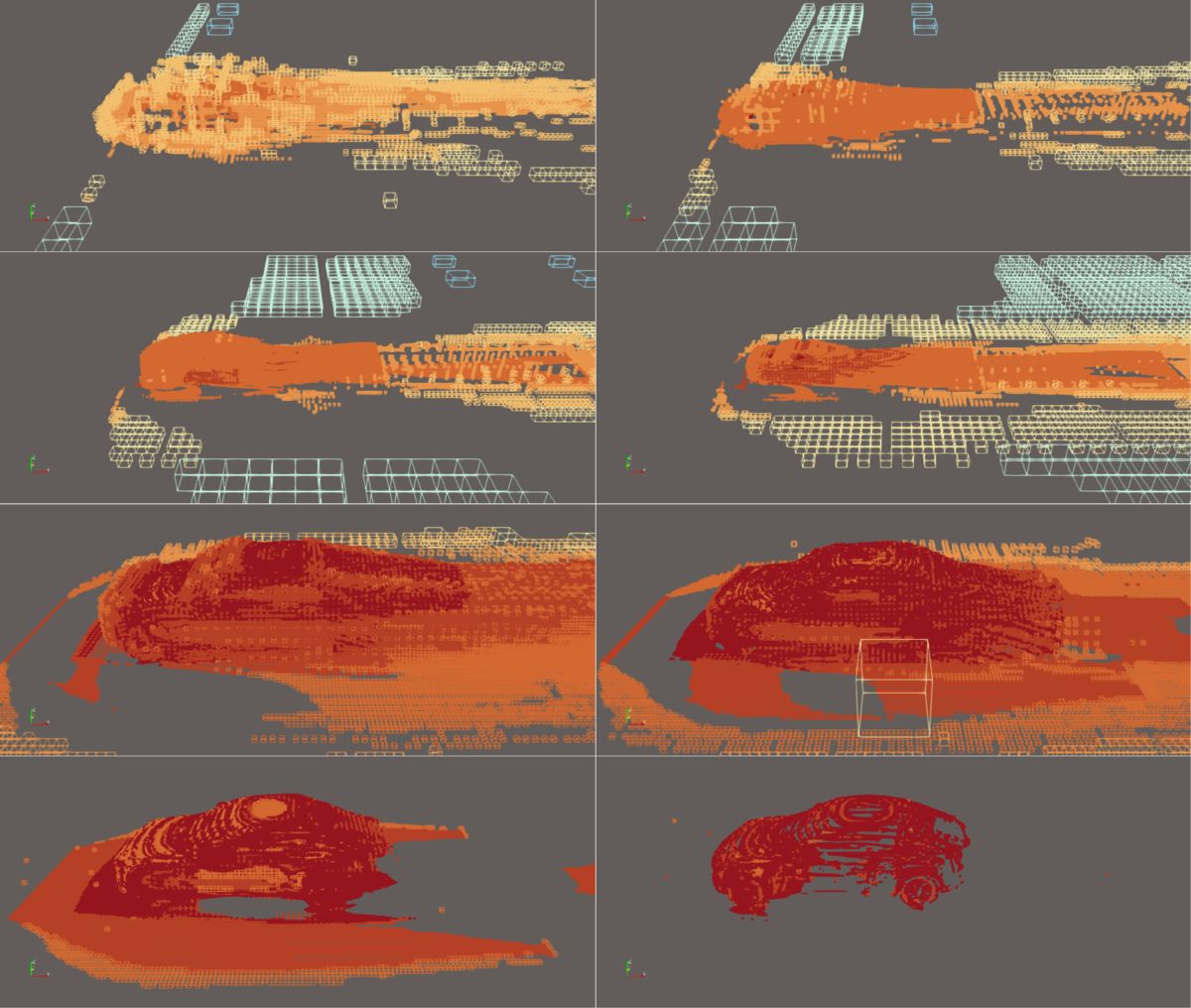}
  \caption{\textbf{Scale-stratified cell visualization for DrivAerML.} Eight consecutive scales are selected from the Step 2 multi-scale grouping results. Each scale groups cells whose points share similar physical variations, enabling geometrically different cells to be assigned to the same scale.}
  \label{fig:app-vis-12}
\end{figure}

\clearpage
\subsection{Step 3 Sampling Results for DrivAerML}
\label{sec:drivaerml-step3}
MP denotes M$^3$-preprocessed data, and $x\%$ indicates the downsampling ratio. M$^3$ produces a more balanced spatial distribution than uniform random sampling, which over-samples dense boundary regions and leads to uneven coverage.

\begin{figure}[h]
  \centering
  \includegraphics[width=\linewidth]{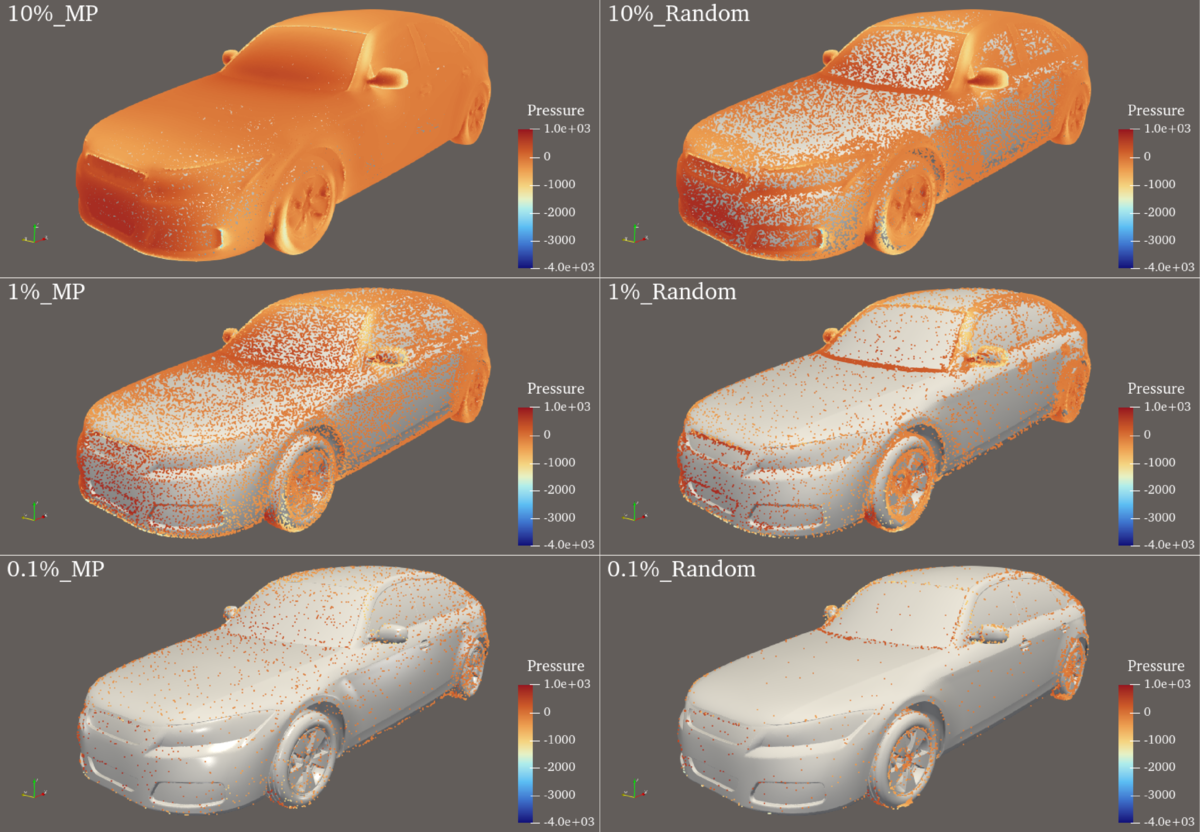}
  \caption{\textbf{Sampling comparison on surface pressure (1).} }
  \label{fig:app-vis-13}
\end{figure}

\begin{figure}[h]
  \centering
  \includegraphics[width=\linewidth]{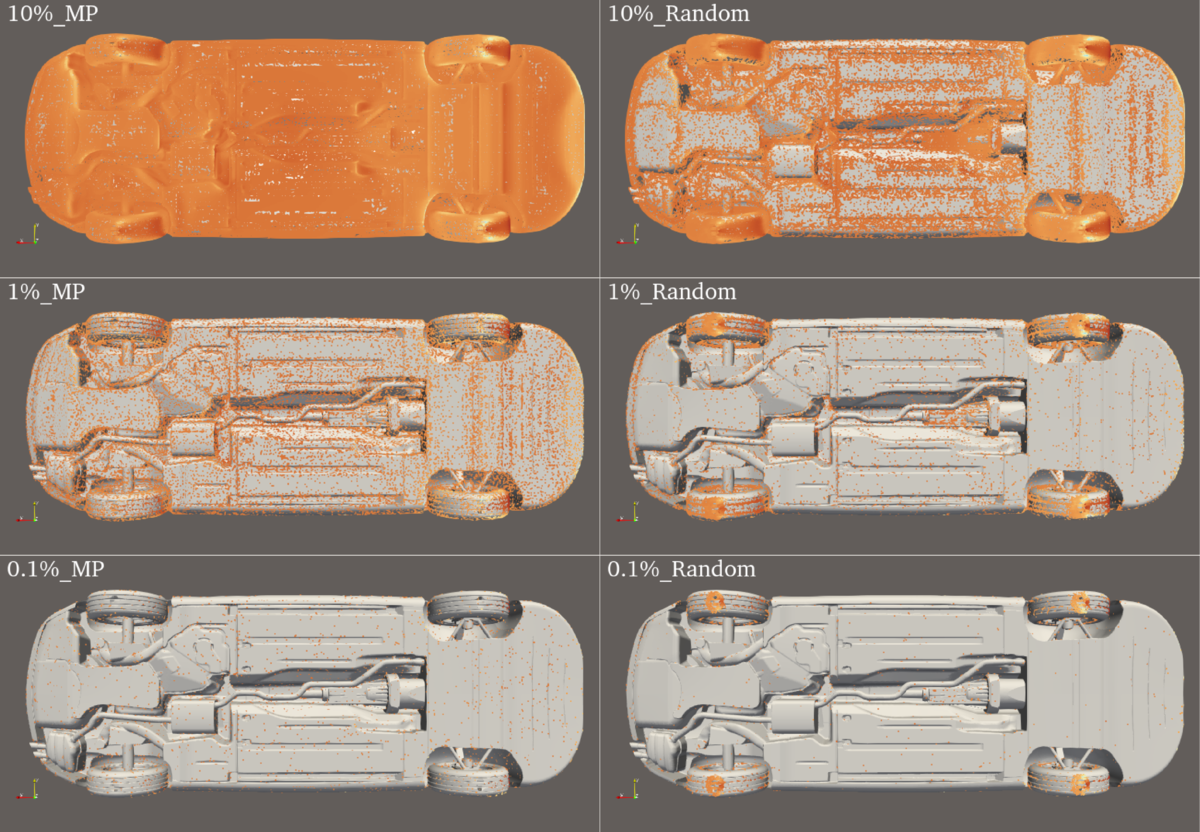}
  \caption{\textbf{Sampling comparison on surface pressure (2).}}
  \label{fig:app-vis-14}
\end{figure}

\begin{figure}[h]
  \centering
  \includegraphics[width=\linewidth]{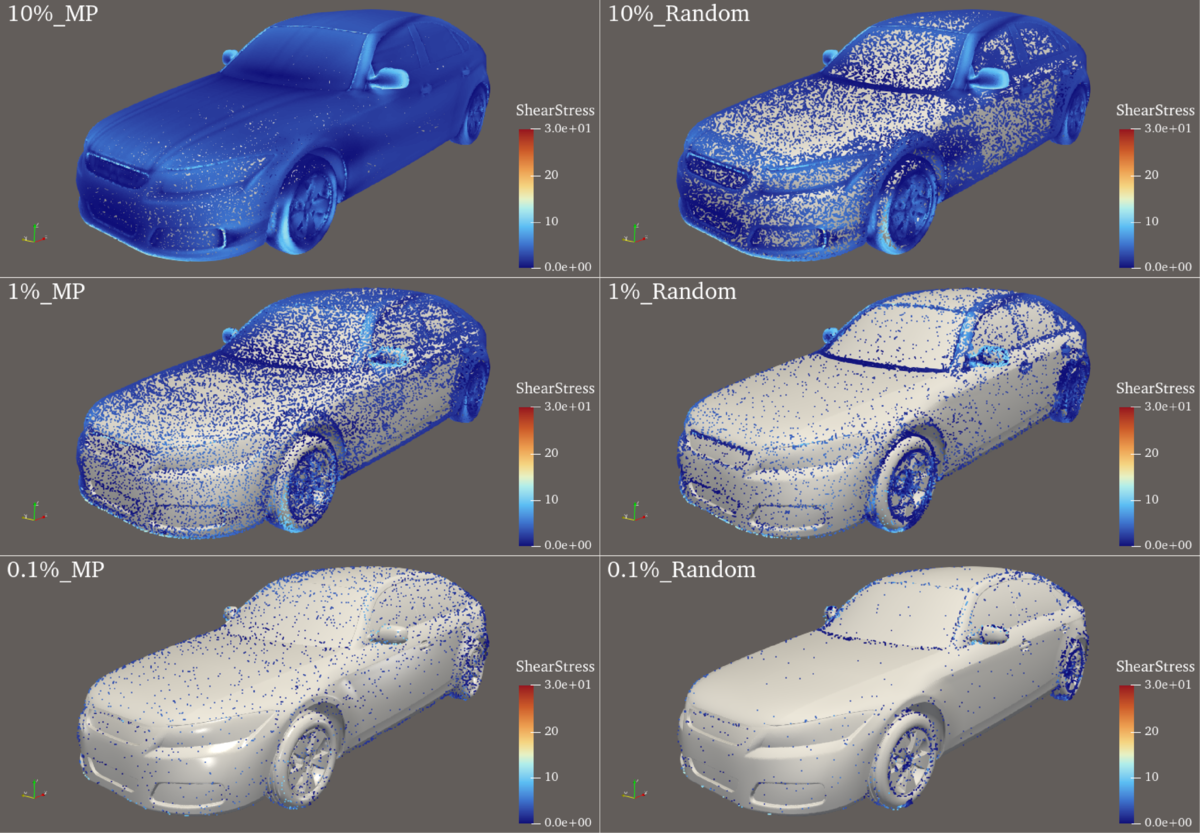}
  \caption{\textbf{Sampling comparison on surface shear stress (1).}}
  \label{fig:app-vis-15}
\end{figure}

\begin{figure}[h]
  \centering
  \includegraphics[width=\linewidth]{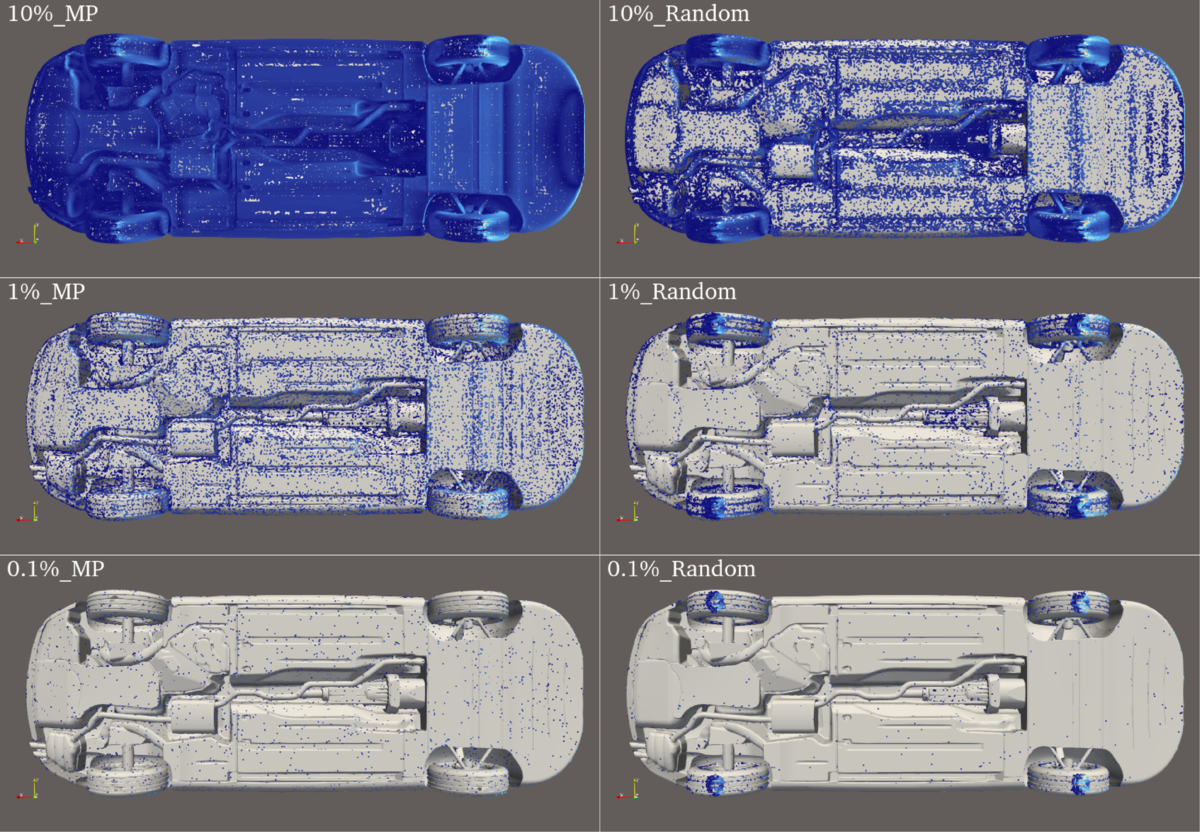}
  \caption{\textbf{Sampling comparison on surface shear stress (2).} Point concentration is observed in the tire region under random sampling due to locally dense mesh discretization.}
  \label{fig:app-vis-16}
\end{figure}

\begin{figure}[h]
  \centering
  \includegraphics[width=\linewidth]{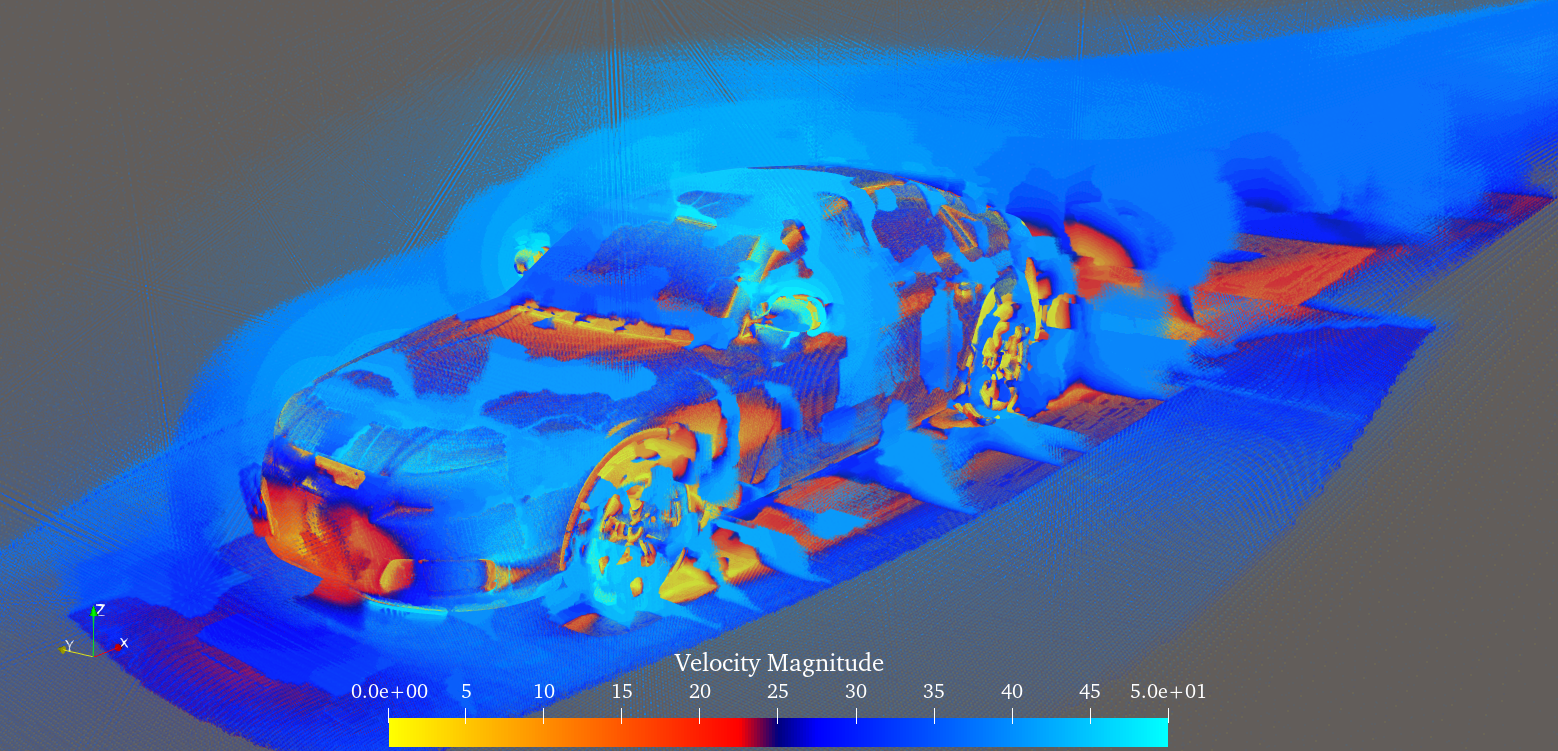}
  \caption{\textbf{Ground truth with 160M volume points.} Minor visual differences in the sampling results are attributable to rendering.}
  \label{fig:app-vis-17}
\end{figure}

\begin{figure}[h]
  \centering
  \includegraphics[width=\linewidth]{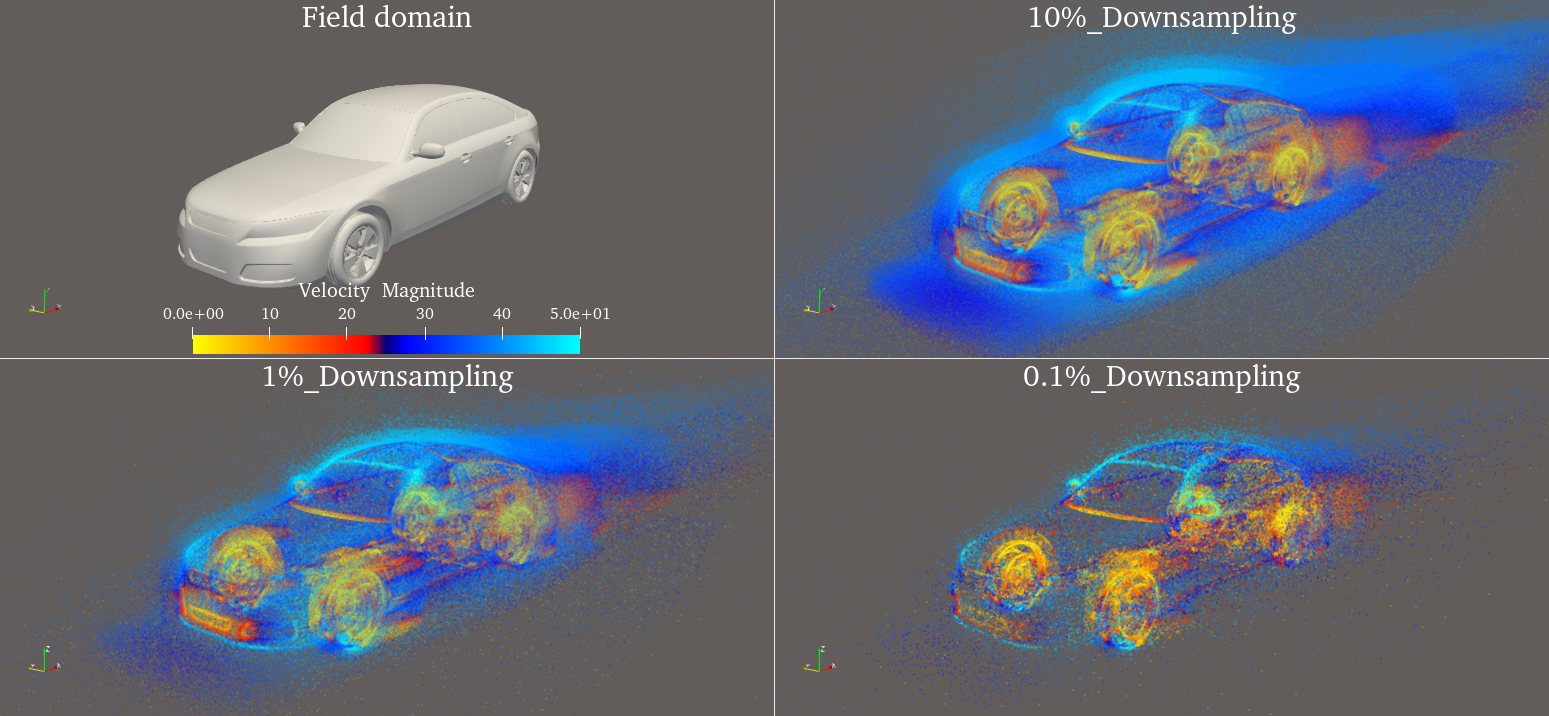}
  \caption{\textbf{Uniform random sampling results in the volume field.} The distribution follows the mesh density, leading to concentration near boundary layers (region highlighted in yellow).}
  \label{fig:app-vis-18}
\end{figure}

\begin{figure}[h]
  \centering
  \includegraphics[width=\linewidth]{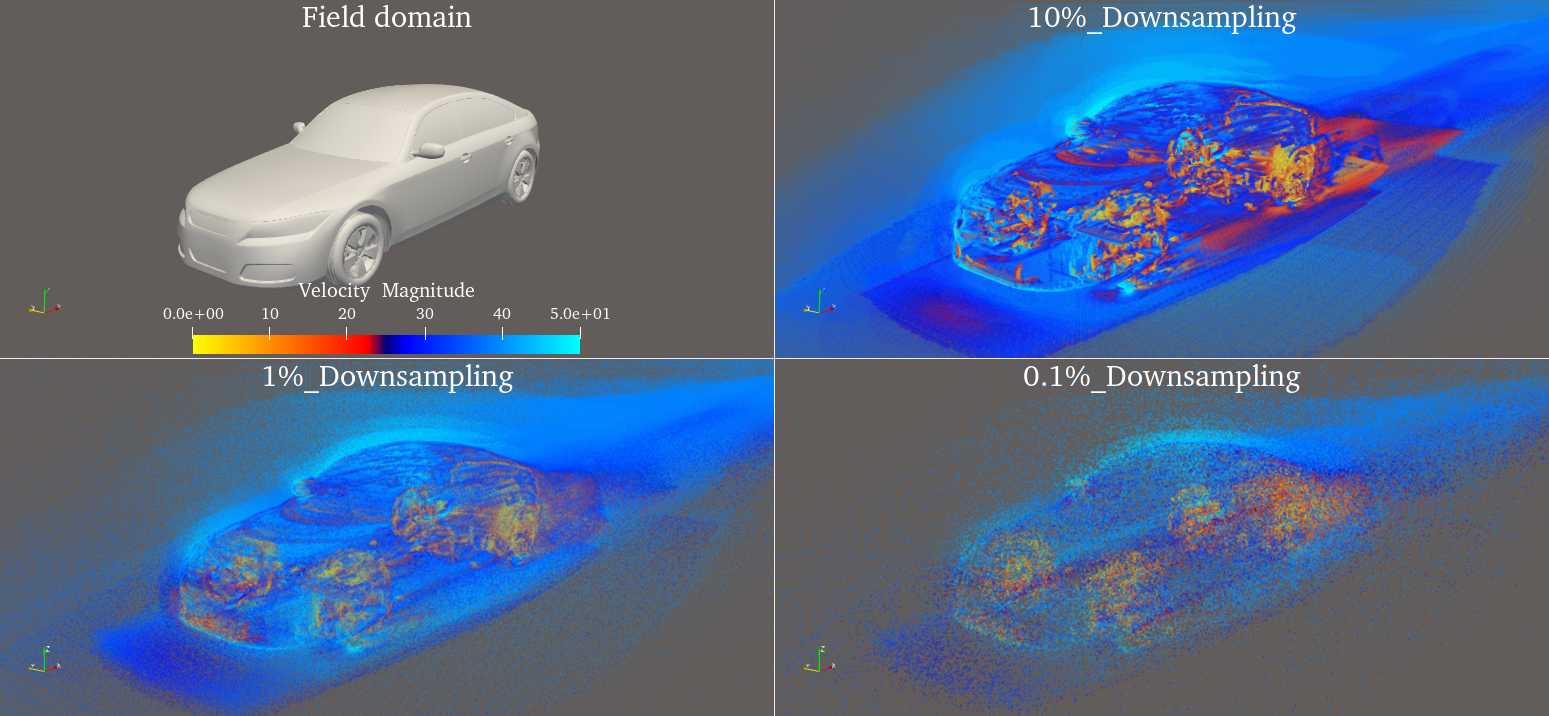}
  \caption{\textbf{M$^3$ sampling results in the volume field.} The distribution is more uniform across the domain while preserving key volumetric features, more closely matching the full-resolution ground-truth distribution.}
  \label{fig:app-vis-19}
\end{figure}

\clearpage
\subsection{Inference Results at Full Resolution}
\label{sec:app-full-resolution}

\begin{figure}[h]
  \centering
  \includegraphics[width=\linewidth]{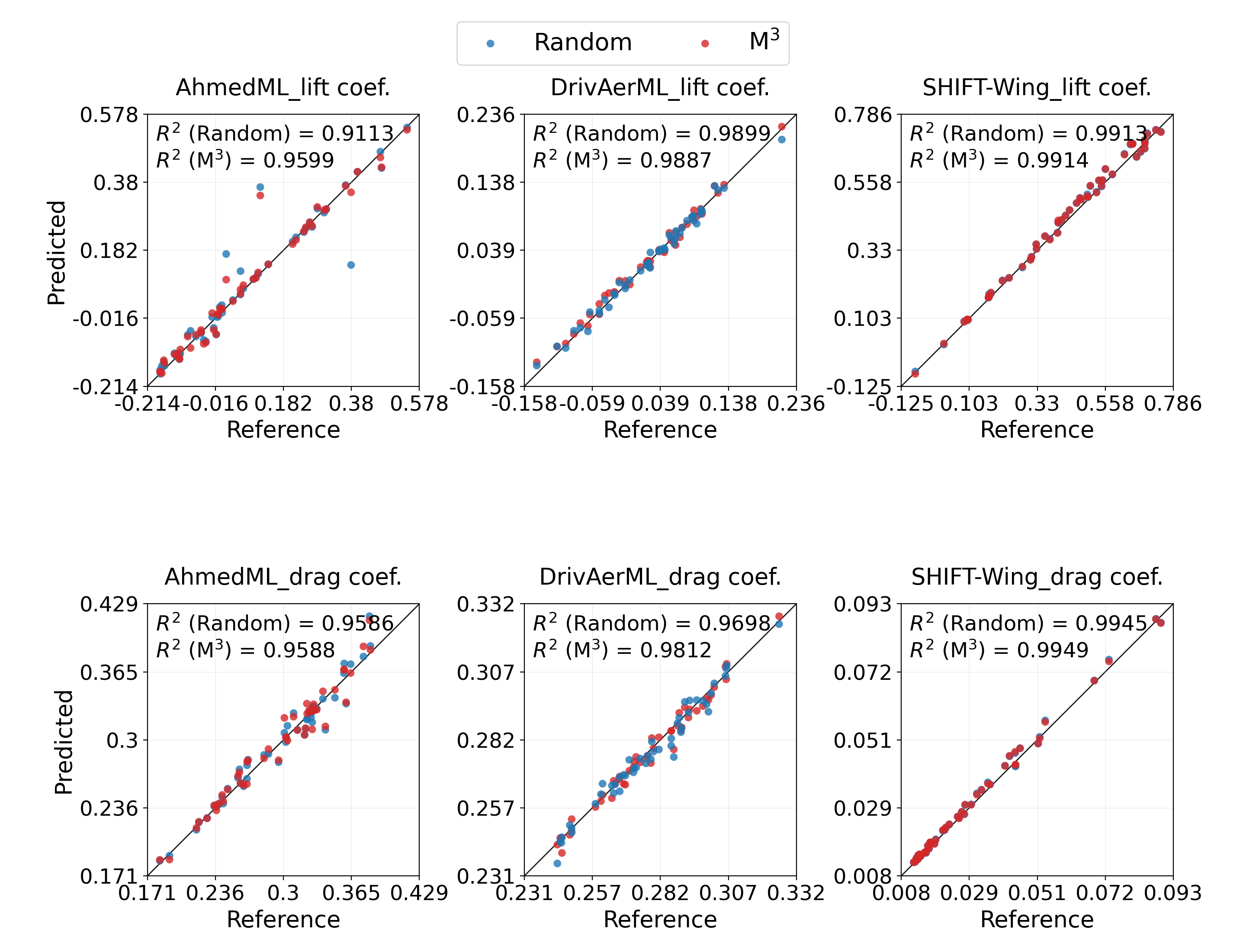}
  \caption{\textbf{Comparison of lift and drag predictions between randomly trained and M$^3$-trained models across held-out cases} (50 for SHIFT-Wing and AhmedML, 49 for DrivAerML). As key aerodynamic quantities, these results show that physics-weighted metrics are consistent with ground-truth coefficients.}
  \label{fig:app-vis-20}
\end{figure}

\begin{figure}[h]
  \centering
  \includegraphics[width=\linewidth]{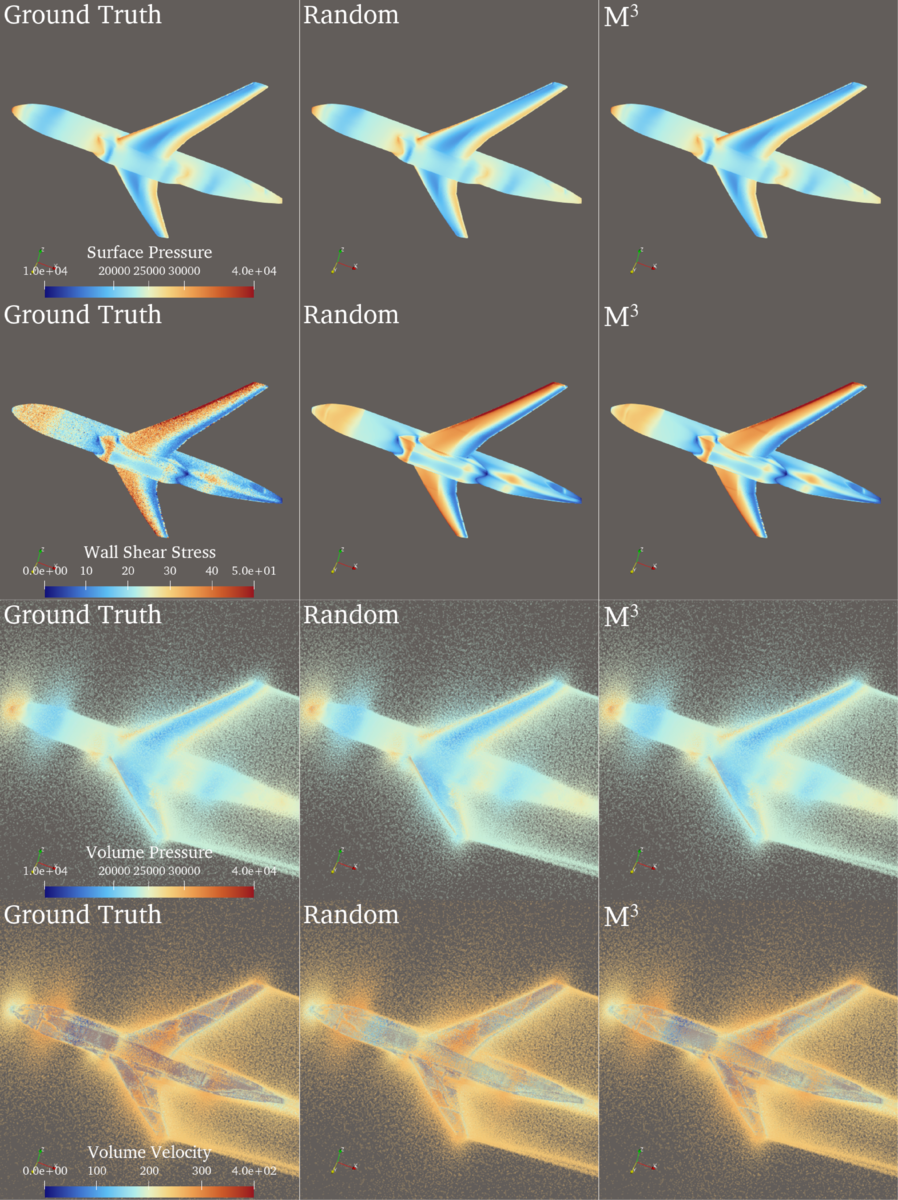}
  \caption{\textbf{Field prediction results for SHIFT-Wing.}}
  \label{fig:app-vis-21}
\end{figure}

\begin{figure}[h]
  \centering
  \includegraphics[width=\linewidth]{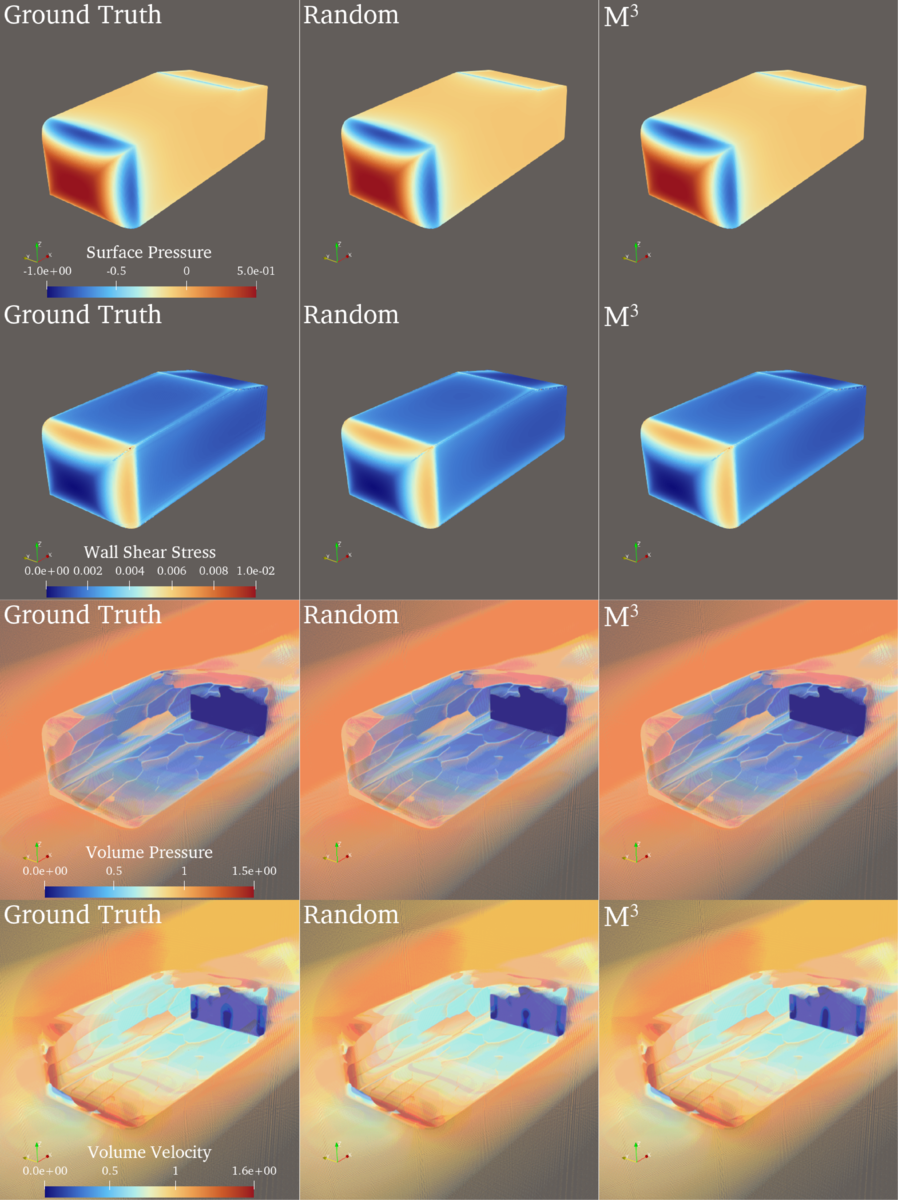}
  \caption{\textbf{Field prediction results for AhmedML.}}
  \label{fig:app-vis-22}
\end{figure}

\begin{figure}[h]
  \centering
  \includegraphics[width=\linewidth]{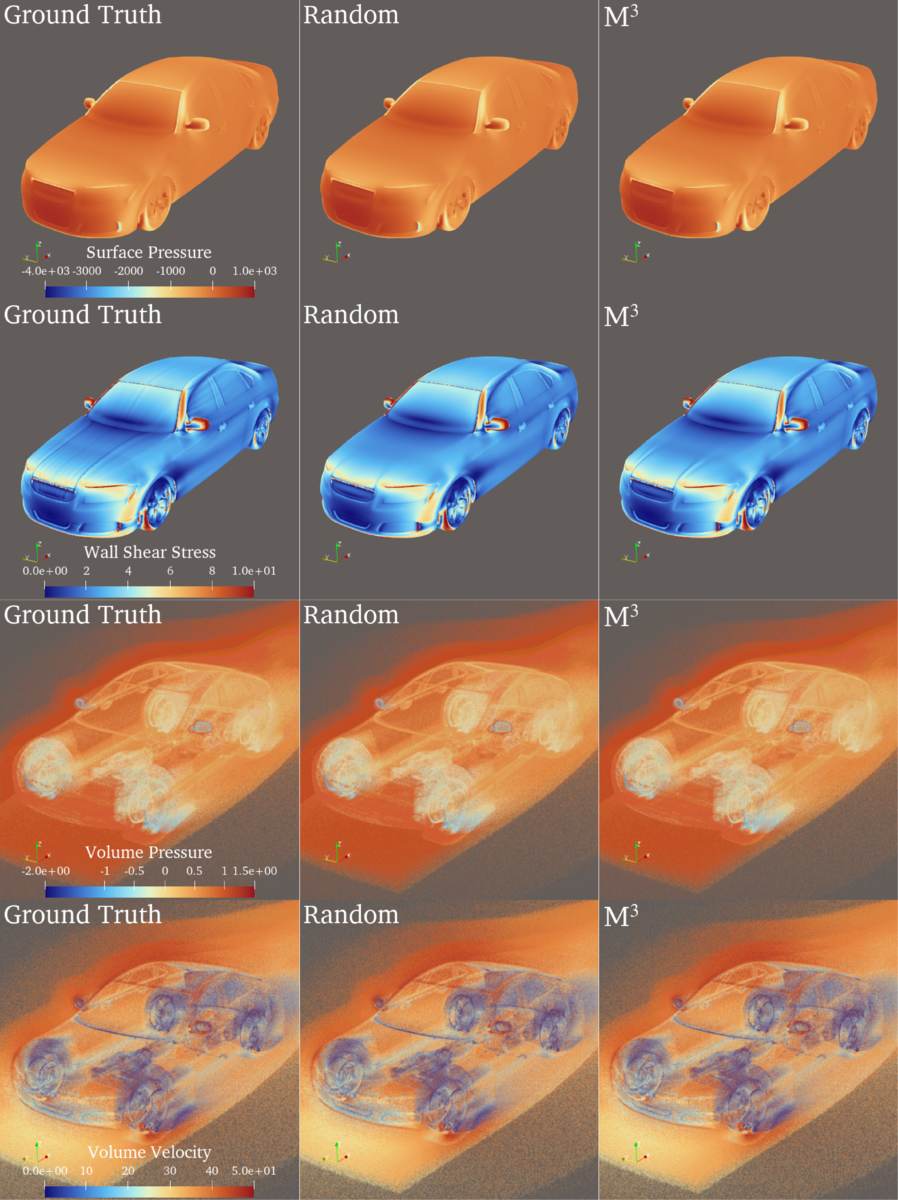}
  \caption{\textbf{Field prediction results for DrivAerML.}}
  \label{fig:app-vis-23}
\end{figure}






\end{document}